\documentclass{article}
\usepackage[utf8]{inputenc}

\usepackage{amsmath, amsthm,amsfonts,amssymb,bbold}
\usepackage{mathrsfs}
\usepackage{tikz-cd}
\usepackage[hyperindex=true,bookmarks=true,bookmarksnumbered=true]{hyperref}
\usepackage[all]{xy}
\newtheorem*{theorem}{Theorem}
\newtheorem*{proposition}{Proposition}
\newtheorem*{lemma}{Lemma}
\newtheorem*{corollary}{Corollary}
\newtheorem*{remark}{Remark}
\newtheorem*{definition}{Definition}
\providecommand{\subjclass}[2][2020]{%
  \par\smallskip\noindent\textit{#1 Mathematics Subject Classification.} #2\par}

\usepackage{algorithm}
\usepackage{algpseudocode}

\usepackage[margin=.3cm]{caption}
\allowdisplaybreaks

\newcommand{\R}{\mathbb{R}}

\def\o{\operatorname{\circ}\,}
\def\X{\mathfrak X}
\def\al{\alpha}
\def\be{\beta}
\def\ga{\gamma}
\def\de{\delta}
\def\ep{\varepsilon}
\def\ze{\zeta}

\def\la{\lambda}

\def\ph{\varphi}

\def\ps{\psi}

\def\Ga{\Gamma}
\def\De{\Delta}

\def\Ph{\Phi}

\def\i{^{-1}}
\def\x{\times}
\def\p{\partial}
\let\on=\operatorname

\usepackage{mathtools}
\DeclarePairedDelimiterX{\norm}[1]{\lVert}{\rVert}{#1}

\DeclareMathOperator*{\argmin}{arg\,min}
\DeclareMathOperator{\Diff}{Diff}

\DeclareMathOperator{\SE}{SE}
\DeclareMathOperator{\SO}{SO}

\DeclareMathOperator{\Id}{Id}
\DeclareMathOperator{\Prirrot}{Pr^\|}
\DeclareMathOperator{\Prsol}{Pr^\perp}
\renewcommand{\SS}{\mathbb S}

\usepackage{xcolor}

\title{Landmark shape spaces with induced metrics}
\author{Sarang Joshi, Peter W. Michor and 
Stefan Sommer}
\date{}

\begin{document}

\maketitle

\begin{abstract}
    We present a unification of Kendall's landmark shape spaces, where rigid motions are factored out and scale fixed on landmark configurations equipped with Euclidean geometry, with landmark configuration spaces carrying Riemannian metrics descending from right-invariant Sobolev metrics on the diffeomorphism group. The resulting new landmark shape spaces achieve the defining properties of both approaches: The regularity of the descending metric prevents landmarks from colliding, the metric is defined in the ambient space independent of the number of landmarks, local rigid transformations are preserved, global rigid motions are removed, and scale fixed. To achieve this, we define a particular Sobolev-type operator, the screened elasticity operator, whose null-space consists exactly of the rigid motions, we show how this operator descends to achieve the desired geometry, and we present approaches to solving matching problems and computing geodesics numerically. The resulting construction allows the use of landmark configuration spaces with sufficiently regular metrics in applications while retaining the shape invariances that are a hallmark of Kendall's shape spaces.
\end{abstract}

\subjclass[2020]{Primary 53C22; Secondary 58D05, 58D19, 46E22, 65D18.}

\section{Introduction}
Kendall's shape space \cite{kendallShapeManifoldsProcrustean1984} is one of the fundamental constructions in shape analysis representing configurations of landmarks in $\R^d$ modulo translation and rotation, i.e. rigid motions, and fixing scale. Shape analysis based on Kendall's shape spaces is used in a range of applied fields, including morphometry and medical imaging. The central idea is that shapes as an intrinsic feature should be invariant to the arbitrary rotation, translation and scaling when concretely observing or representing the object. A metric structure on landmark shape space is obtained by inducing the Euclidean metric in $(\R^d)^n$ before passing to the shape space by modding out the rigid rotations and translations. Distance between shapes is computed as geodesic distance for the resulting Riemannian metric. In this construction there is no guarantee of initially distinct landmarks not coinciding along the geodesic, and the metric structure is inherently tied to the specific number of landmarks. 

Another approach to shape analysis and landmark shape analysis in particular arises from the pattern theory of Grenander \cite{grenanderGeneralPatternTheory1994} and the modern metric formulation in the Large Deformation Diffeomorphic Metric Mapping (LDDMM) setting \cite{trouveDiffeomorphismsGroupsPattern1998,joshiLandmarkMatchingLarge2000,younesShapesDiffeomorphisms2010}. The core idea is to let diffeomorphism groups of the domain wherein the shape resides act on the shape to produce variations of the shape. By equipping the diffeomorphism group with a suitably invariant Riemannian metric, one obtains induced metrics on the associated landmark shape space. In contrast to Kendall's shape spaces, with suitably regular metrics, initially distinct landmarks can never collide, and any path of changing landmark configurations can be lifted in a well-defined way to a corresponding path of diffeomorphisms removing the dependence on the specific number of landmarks.
\begin{figure}[t]
    \centering
    \includegraphics[width=\textwidth,clip,trim=20 0 400 25]{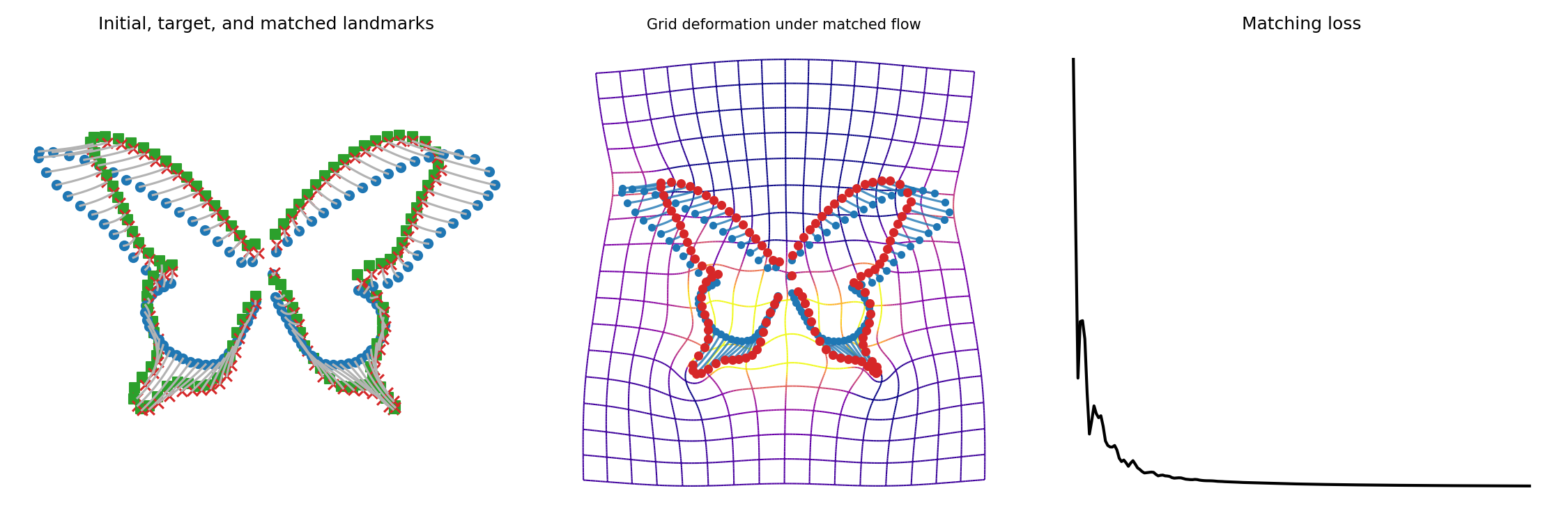}
    \caption{Landmark shape matching of two butterflies with the screened elasticity kernel $k_2$ introduced in this paper. Left: The two butterflies represented by landmarks (blue dots/green squares) and result of the matching (red crosses). Right: Visualization of the deformation induced by the matching visible by the deformation of the initially square grid.}
    \label{fig:butterflies}
\end{figure}

While some works \cite{glaunesTransportParDiffeomorphismes2005,younesCombiningGeodesicInterpolating2006,younesShapesDiffeomorphisms2010} have investigated ways to remove rigid motions when performing LDDMM shape matching, there has been no construction that unifies the two approaches. In this paper, we construct a new shape space that has the defining properties of both approaches: It keeps the regularity of the descending metric to prevent landmarks from colliding and thus preserve shape, it descends from a metric on the diffeomorphism group to avoid tying the metric structure to the number of landmarks, and it factors out rigid motions and fixed scale.

To achieve this, we need an operator that 1) is equivariant with respect to the action of the rigid motion group, 2) has precisely the infinitesimal rigid motions in its null-space, and 3) is sufficiently regular to keep landmarks from colliding. We construct the screened elasticity operator $L_s$ that satisfies these properties, identify its kernel $k_s$, and show how the resulting metric descends to the landmark shape space with rigid motions factored out and scale fixed to achieve the desired properties.
Figure~\ref{fig:butterflies} illustrates the matching of two butterfly shapes with the new landmark shape space.

While the focus of this paper is landmark shape spaces, when extended from landmarks to surfaces and images, this framework provides a single geometric model in which global rigid motion is quotiented out while the remaining non-rigid variability is represented by diffeomorphic flows. Thus, rigid registration and deformable registration are no longer separate preprocessing and matching steps, but parts of one unified metric framework that preserves local rigid motion while measuring true non-rigid shape change.

\subsection{Landmark shape matching}
To motivate the problem, consider a landmark configuration $q=(q_1,\ldots,q_n)$, an ordered set of $n$ points in $\R^d$, and another configuration of corresponding landmarks $q'=(q'_1,\ldots,q'_n)$. A standard problem in shape analysis is to find a transformation that matches these points. If we restrict the allowable transformations to rigid motion and use a least squares matching criterion, the problem is given as
\begin{equation}
    \argmin_{R\in\SO(d),b\in\R^d} \sum_{i=1}^{n}||Rq_i+b - q'_i||^2 
    .
\end{equation}
After solving for the rigid transformation, the landmarks are generally not perfectly matched, and we may need to find a more general diffeomorphic transformation that performs the remaining transformation for the matching. As the rigid transformation is generally not considered part of the shape itself, we would like to have a geometric picture in which the rigid registration is free but there is a metric on the remaining, non-rigid part. Formulating the matching problem in this way leads naturally to consider semi-degenerate metrics on the diffeomorphism group that have the rigid transformations in the null-space.
Once we do the rigid transformation of the landmarks, we do not want to introduce transformations in the remainder that has a rigid part. These concepts are linked via conditionally positive definite (CPD) kernels. The final construction needs a particular operator and connected kernel that gives a Riemannian metric on the resulting quotient space. The corresponding kernel is CPD exactly with respect to the space of infinitesimal rigid motions. The effect of the new kernel when matching shapes differing by local rigid motions is illustrated in Figure~\ref{fig:matching-grid-kernel-comparison}.
\begin{figure}[t]
    \centering
    \includegraphics[width=\textwidth,clip,trim=0 0 0 25]{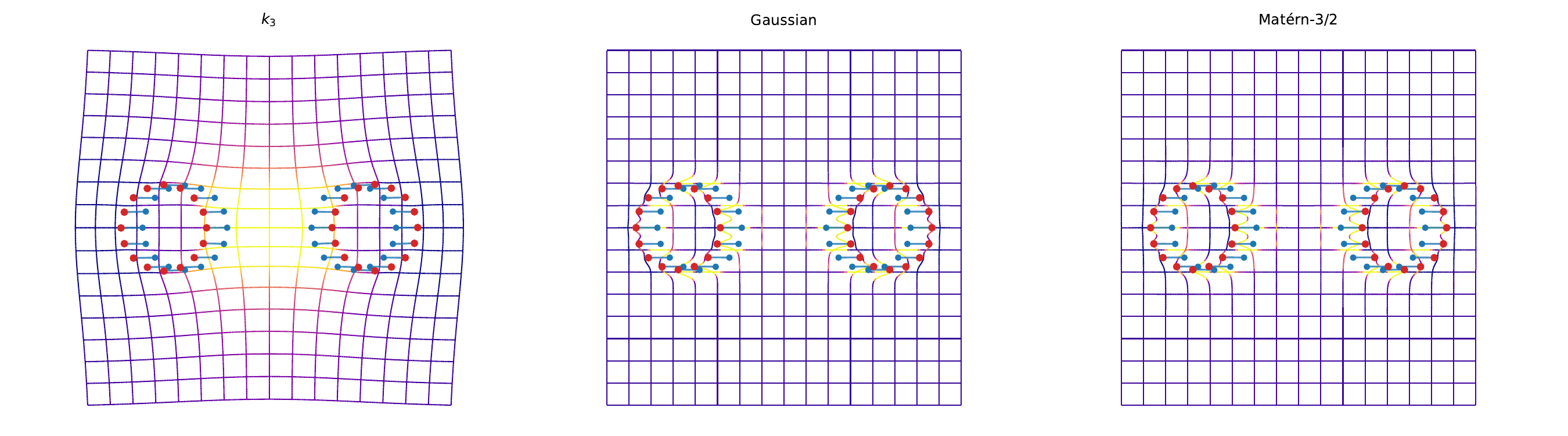}
    \caption{Landmark shape matching example, comparing the screened elasticity kernel $k_2$ (left), the Gaussian kernel (center), and the Mat\'ern kernel $k_{H^{3/2}}$ (right). The initial configuration of two circles (blue dots) differs from the target (red dots) only in local outwards translations in the horizontal direction. The landmark trajectories during the matching are shown together with the deformation induced by the matching visible in the colored grid. The $k_2$ screened elasticity kernel introduced in this paper preserves the local rigid motion inside the circles, while both the Gaussian and Mat\'ern kernel penalize any movement and thus do not move the centers of the circles in a local rigid way: With a rather small value of the kernel width $\sigma$ as in this case, the centers stay almost constant for the Gaussian and Mat\'ern kernel. The preservation of local rigid deformation achieved by the $k_2$ kernel is a main target of the paper.}
    \label{fig:matching-grid-kernel-comparison}
\end{figure}

\subsection{Contributions}
We present a new landmark shape space that has the defining properties of both Kendall's shape spaces and LDDMM shape spaces. From the Kendall shape space point of view, the model has the regularity to prevent points from colliding and the metric is defined in the ambient space thus inducing compatible metrics with varying numbers of landmarks. From the LDDMM viewpoint, the new operator induces a quadratic form with exactly infinitesimal rigid motions in the null-space that descends to the quotients modulo the rigid motion group and preserves local rigid transformations.

To enable the above, we add regularity to the classical elasticity operator to retrieve an operator that has exactly rigid transformations in the null-space, thereby placing itself between standard Sobolev operators and the degenerate Laplacian that leads to Beppo-Levi spaces. We thus get a particular instance of conditionally positive definite kernels, but with conditions on a smaller set of polynomials compared to the Beppo-Levi case. This is important because we cannot factor out larger sets of transformations than the rigid motions as this would break the equivariance of the operator.

\subsection{Applied impact in shape analysis}
Kendall's shape spaces are widely used in geometric morphometrics. However, the Euclidean metric on landmarks implies no correlation between the points on the metric side making it hard to parametrically model covariance between evolving landmarks, and the metric structure is intrinsically dependent on the number of landmarks. The wider family of metrics that we allow solves these issues. Conversely, while LDDMM is used in range of applications including medical imaging and biology, the rigid motion invariance is generally not accounted for in the LDDMM setting, with only an initial preregistration being used as an ad-hoc substitute. The new metrics will directly improve the modeling in this setting. The construction additionally will allow flow-based models to be used, for example stochastic processes based on Kunita flows that are used for modelling change of shape over time in evolutionary biology \cite{sommerStochasticsShapesKunita2026}.

\subsection{Overview and paper outline}
In section~\ref{sec:background}, we describe the necessary background for the paper. Section~\ref{sec:quotient_shape_space} describes the quotient shape space arising from removing rigid transformations. In section~\ref{sec:operator}, we discuss the precise properties needed for the operator and define the screened elasticity operator that precisely has infinitesimal rigid transformations in its null-space and preserves local rigid transformations, interpret its Green's kernel as a conditionally positive definite kernel, and describe the induced geometry on the quotient shape space. We continue in section~\ref{sec:scale} by restricting to fixed-scale submanifolds of the shape space. We conclude the paper by numerical considerations and experiments in section~\ref{sec:implementation} and section~\ref{sec:experiments}. The appendices contain further background material and derivations.

\section{Background}
\label{sec:background}
We here describe the necessary background material on Kendall's and LDDMM landmark shape spaces, the elasticity operator, rigid motions and projections onto the infinitesimal rigid motions. We shall repeatedly use the Euclidean vector space $\mathbb R^d$ with norm $|x|$ and inner product $\langle x,y\rangle$. Other inner products will be distinguished by subscripts like $\langle f,g\rangle_{L^2}$.

\subsection{Kendall's and LDDMM landmark spaces}
\label{sec:kendall}
Let $q=(q_1,\ldots,q_n)$ denote a configuration of $n$ landmarks in $\R^d$. Landmarks are just points, but the notion landmark is generally used when they represent a shape. 
Kendall's shape spaces \cite{kendallDiffusionShape1977} $\Sigma_d^n$ are quotients of spaces of such configurations of $n$ landmarks in $\R^d$ with translation removed, normalized for scale, and taken modulo rotation, i.e., $M/\SO(d)$ where $M$ is the intersection of the set of landmark configurations with zero centroid and the unit sphere $S^{nd}$ in $(\R^d)^n$ (scaling to $\sum_i |q_i|^2 =1$), and where the special orthogonal group $\SO(d)$ acts on landmark configurations diagonally, i.e., by moving each landmark in the configuration. The space $M$ of landmark configurations centered and normalized for scale, but before factoring out rotations, is denoted the pre-shape space. The pre-shape space forms a sphere that inherits the standard Euclidean metric on $(\R^d)^n$. The standard Riemannian metric on the pre-shape space descends under the quotient operation to give a metric on the quotient shape space \cite{leRiemannianStructureEuclidean1993}.

In contrast to Kendall's shape space, the LDDMM landmark spaces $\on{Land}^n = \on{Land}^n(\mathbb R^d)$ require distinct points, i.e. $q_i\neq q_j$ for $i\neq j$, and $\on{Land}^n$ is therefore an open subset of $(\R^d)^n$. While Kendall's shape metric on the pre-shape space is of $L^2$-type with no coupling between the landmarks, the metrics on $\on{Land}^n$ are induced from metrics on $\Diff_{\mathcal A}(\R^d)$, typically Sobolev metrics.  The construction is described in Appendix~\ref{sec:induced} and the specific diffeomorphism groups $\Diff_{\mathcal{A}}(\mathbb R^d)$ used in the paper and their Lie algebras $\X_{\mathcal{A}}(\mathbb R^d)$ are given in Appendix~\ref{sec:groups}. 

 The construction applies to shape spaces beyond landmarks, for example closed curves $c:\SS^1\to \R^d$ or surfaces $s:\SS^2\to \R^d$. While we focus here on the landmark case, the constructions of this paper can be extended more generally to other shape spaces.

Other shape models based on elasticity has been introduced in several works in the shape literature, e.g., the shape averaging and covariance analysis in \cite{doi:10.1137/080738337,doi:10.1137/S0036139902419528}, elasticity based shape metrics \cite{fuchsShapeMetricsBased2009}, and in image registration \cite{doi:10.1137/080738337}. These differ from LDDMM in modeling the displacement between shapes compared to the time integration of tangent vector fields, or in being inner metrics modeling the elastic deformation of the shape itself instead of the embedding space as done in LDDMM.

\subsection{Landmark shape matching}
\label{sec:shape_matching}
Given shapes $q$ and $q'$, shape matching is the problem of finding a diffeomorphism $\phi$ that maps $q$ to the target $q'$ such that the induced metric on the landmark space is minimized. This is equivalent to finding a geodesic in the landmark space that connects $q$ and $q'$. Given an initial configuration $q$ and a target configuration $q'$, this can be phrased as finding an initial momentum $p$ minimizing
\begin{equation*}
    p \mapsto \mathcal L\bigl(q(T;q,p),\, q'\bigr),
\end{equation*}
where $q(T;q,p)$ is the endpoint of the geodesic at time $T$ starting at $q$ with initial momentum $p$, and $\mathcal L$ is a measure of the discrepancy between $q(T;q,p)$ and $q'$, e.g. the sum of squared distances between corresponding landmarks.
This approach to shape matching is also denoted \emph{shooting}.

Landmark matching that disregards rigid or affine motions in the context of LDDMM was described in \cite{glaunesTransportParDiffeomorphismes2005}, where derivatives of rigid motions were included either in the vector field governing the flow or rigidly aligning the final diffeomorphism, and equivalence between the two approaches is shown; in \cite{younesCombiningGeodesicInterpolating2006} where affine alignment of the flow end-state to the target is imposed as a boundary condition; and furthermore in \cite{younesShapesDiffeomorphisms2010} where asymptotically affine kernels are covered.

\subsection{Infinitesimal rigid motions and the elasticity operator}
\label{sec:elasticity_operator}
Since we aim to quotient out rigid motions from the landmark space, we will consider operators whose null-space consists exactly of the infinitesimal rigid motions. We here describe the rigid motion group and the classical elasticity operator.

Let $\SE(d)$ be the special Euclidean group of rigid motions of $\mathbb{R}^d$.
A rigid motion $g\in\SE(d)$ of $\mathbb{R}^d$ has the form $g(x)=Rx+b$ with $R\in\SO(d)$ and $b\in\mathbb{R}^d$.
Let $t\mapsto g(t)=(R(t),b(t))$ be a $C^1$ curve in $\SE(d)$ and set
\[
v(x):=\left.\frac{d}{dt}\right|_{t=0}(R(t)x+b(t))
      =\Omega x+a,
\qquad \Omega:=\dot R(0),\ a:=\dot b(0).
\]
Differentiating $R(t)^{\mathsf T}R(t)=\Id$ at $t=0$ yields $\Omega^{\mathsf T}+\Omega=0$, i.e. $\Omega\in\mathfrak{so}(d)$ is a skew-symmetric matrix.
Hence the Lie algebra of infinitesimal rigid motions is
\[
\mathfrak{se}(d)
:=\{\,x\mapsto \Omega x+a \ :\ a\in\mathbb{R}^d,\ \Omega\in\mathfrak{so}(d)\,\},
\]
a subset of $\R^d$-valued first order polynomials with dimension $\dim(\mathfrak{se}(d))=d+\tfrac{d(d-1)}2=\tfrac{d(d+1)}2$.

The classical elasticity operator on vector fields on $\mathbb R^d$, also known as the Killing operator, $\on{sym}\nabla: C^{\infty}(\mathbb R^d,\mathbb R^d)\to C^{\infty}(\mathbb R^d, S^2\mathbb R^d)$ is given by 
\[
\on{sym}\nabla v = (\partial_i v^j + \partial_j v^i)_{i,j=1}^d
.
\]
It has the infinitesimal rigid motions as its null-space: If $\partial_i v^j + \partial_j v^i=0$ for all $i,j$, then for any $l$ we have 
$0=\p_l\p_i v^j + \p_l\p_j v^i =\p_i\p_lv^j + \p_j\p_lv^i = -\p_i\p_jv^l - \p_j\p_iv^l = -2\p_i\p_jv^l$.
Thus $v(x)= Ax+ b = (A^i_j x^j +b^i)$. Reinserting we get  $0=\p_i v^j + \p_j v^i=A^j_i +A^i_j$ so that $A\in\mathfrak{so}(d)$.

We can identify the corresponding inertia operator by partial integration
\begin{align*}
\int_{\mathbb R^d} \on{Tr}\big(\on{sym}\nabla v.\on{sym}\nabla w \big)dx 
&= \int_{\mathbb R^d} \sum_{i,j=1}^d(\p_i v^j+\p_j v^i)(\p_i w^j + \p_j w^i)dx
\\&
= \int_{\mathbb R^d}\sum_{i,j=1}^d\big((-2\p_i^2 v^j -2 \p_j\p_i v^i)w^j \big)dx
\\&
= \int_{\mathbb R^d} \langle (-2\De - 2\nabla\on{div})v, w\rangle dx
.
\end{align*}
The Fourier transform of the elasticity operator is $\widehat{\on{sym}\nabla v}(\xi) = i\big(\xi\otimes \hat v(\xi) + (\xi\otimes \hat v(\xi) )^T \big)$. From this, define the operator $L_{\on{sym}}=-\on{div}\o \on{sym}\nabla$ with Fourier transform
\begin{equation*}
\hat L_{\on{sym}}\hat v(\xi) = 2\big(|\xi|^2 \hat v(\xi) + \langle \xi,\hat v(\xi)\rangle\xi\big) = 2\big(|\xi|^2\on{Id}  +  \xi\otimes \xi\big)\hat v(\xi)
.
\end{equation*}
The operator is elliptic on the orthogonal complement of its polynomial null-space which coincides exactly with the Lie algebra $\mathfrak{se}(d)$ of infinitesimal rigid motions. Up to the factor 2, $L_{\on{sym}}$ is the classical Helmholtz operator of linear elasticity, or, more generally, the linear elasticity operator
\begin{equation}
L_{\on{Lam\acute{e}}}v = -\mu\De v - (\mu+\lambda)\nabla\on{div}v)
\label{eq:L_Lame}
\end{equation}
with $L_{\on{sym}}$ corresponding to Lamé parameters $\mu=2$ and $\lambda=0$, see e.g. \cite{marsdenMathematicalFoundationsElasticity1994,MathematicalTheoryElastic2017,mcleanStronglyEllipticSystems2000}.

The operator $L_{\on{Lam\acute{e}}}$ has Green's function on the complement of infinitesimal rigid motions in the form of the Kelvin matrix $\Phi$ which, for $d=2$, is given by
\begin{equation*}
    \Phi(x) = \frac{1}{4\pi\mu(2\mu+\lambda)}\left((3\mu+\lambda)\log\frac{1}{|x|}\,I_2+(\mu+\lambda)\frac{xx^T}{|x|^2}\right)
\end{equation*}
and, for $d=3$,
\begin{equation*}
    \Phi(x) = \frac{1}{8\pi\mu(2\mu+\lambda)}\left((3\mu+\lambda)\frac{1}{|x|}I_3+(\mu+\lambda)\frac{xx^T}{|x|^3}\right)
    \quad\quad
\end{equation*}
We will use the elasticity operator $L_{\on{sym}}$, it's kernel, and screened versions of both in the following.

\subsection{Lagrange basis and projector for infinitesimal rigid motions}
\label{sec:lagrange_basis}
To interpret the Kelvin matrix above and later on kernels for the screened elasticity operator, we now specialize the construction of conditionally positive definite kernels (CPD) to the case where the condition space is the Lie 
algebra $\mathfrak{se}(d)$ of infinitesimal rigid motions. While the general CPD construction is outlined in Appendix~\ref{sec:cpd_kernels}, the Kelvin matrix and its screened versions are matrix-valued rather than scalar, so the standard construction need to be adapted to vector-valued point evaluations.

Define linear functionals $\{\xi_i\},\{\xi_{ij}\}$ on vector fields $v\in C^0(\mathbb{R}^d,\mathbb{R}^d)$ by
\[
\xi_i(v):=e_i\cdot v(0)=v_i(0),\qquad i=1,\dots,d,
\]
and, for $1\le i<j\le d$,
\[
\xi_{ij}(v):=e_i\cdot\bigl(v(e_j)-v(0)\bigr)=v_i(e_j)-v_i(0).
\]
For $v(x)=\Omega x+a\in\mathfrak{se}(d)$ one has
\[
\xi_i(v)=a_i,
\qquad
\xi_{ij}(v)=e_i\cdot(\Omega e_j)=\Omega_{ij}\quad (i<j).
\]
In particular, $\{\xi_i\},\{\xi_{ij}\}_{i<j}$ extracts exactly the coefficients $(\Omega,a)$. If $v(x)=\Omega x+a\in\mathfrak{se}(d)$ satisfies
$\xi_i(v)=0$ for $i=1,\ldots,d$ and $\xi_{ij}(v)=0$ for $i<j$, then $a_i=\xi_i(v)=0$ so $a=0$, and $\Omega_{ij}=\xi_{ij}(v)=0$ for all $i<j$.
Since $\Omega$ is skew-symmetric, this implies $\Omega=0$, hence $v=0$.
Therefore the functionals are unisolvent on $\mathfrak{se}(d)$.

Define now basis functions in $\mathfrak{se}(d)$ by
$p_i(x)=e_i$ for $i=1,\ldots,d$ and $p_{ij}(x)=x_j e_i - x_i e_j$ for $1\le i<j\le d$.
Then
$\xi_k(p_i)=\delta_{ki}$ and $\xi_{pq}(p_i)=0$, and $\xi_k(p_{ij})=0$ and $\xi_{pq}(p_{ij})=\delta_{ip}\delta_{jq}$. Therefore $\{p_i\}_{i=1}^d,\{p_{ij}\}_{1\le i<j\le d}$ is a Lagrange basis for $\mathfrak{se}(d)$ with respect to $\{\xi_i\},\{\xi_{ij}\}_{i<j}$.

Define the map $\Pi_{\mathfrak{se}(d)}$ on vector fields by
\[
\Pi_{\mathfrak{se}(d)}(v)(x)
:=
\sum_{1\le i<j\le d}\xi_{ij}(v)\,p_{ij}(x)
+\sum_{i=1}^d \xi_i(v)\,p_i(x).
\]
Then $\Pi_{\mathfrak{se}(d)}(v)\in\mathfrak{se}(d)$, $\xi_i(\Pi_{\mathfrak{se}(d)}(v))=\xi_i(v)$, and $\xi_{ij}(\Pi_{\mathfrak{se}(d)}(v))=\xi_{ij}(v)$. $\Pi_{\mathfrak{se}(d)}$ is a projection onto $\mathfrak{se}(d)$, because, for $v\in\mathfrak{se}(d)$ with $v(x)=\Omega x+a$,
\[
\Pi_{\mathfrak{se}(d)}(v)(x)
=\sum_{1\le i<j\le d}\Omega_{ij}\,p_{ij}(x)+\sum_{i=1}^d a_i\,p_i(x)
=v(x).
\]
For ease of notation, in the following, we let $\lambda_1,\ldots,\lambda_r$ and $p_1,\ldots,p_r$ denote the functionals $\xi_i,\xi_{ij}$ and basis functions $p_i,p_{ij}$, respectively, with $r=d(d+1)/2$.

Let $\Phi$ be a matrix-valued Green's function such as the Kelvin matrix above. For linear functionals of the form $\lambda=\sum_{i=1}^N a_i\otimes\delta_{x_i}$ with $x_i,a_i\in\mathbb R^d$ and $(a\otimes\delta_x)(f)=a^T f(x)$, define
$G_\lambda = \sum_{i=1}^N \Phi(\cdot,x_i)a_i$.
Define $\tilde{\mathcal F}$ as
\begin{equation*}
\tilde{\mathcal F}=
\left\{G_\lambda \,\Big|\, \lambda=\sum_{i=1}^Na_i\otimes\delta_{x_i},\ \lambda(p)=0\,\forall p\in \mathfrak{se}(d)
\right\}
\end{equation*}
and $\mathcal F$ as its completion in the kernel norm $\| G_{\la}\|^2_{\Ph} = \sum_{i,j}\langle a_i,\Ph(x_i-x_j)a_j\rangle$; see \ref{sec:cpd_kernels}. Let now $G_{(\lambda)}$ be the $\mathfrak{se}(d)$-corrected version of $G_\lambda$ defined by
\[
G_{(\lambda)}
= G_\lambda - \sum_{i=1}^r \lambda(p_i)G_{\lambda_i}.
\]
Then $G_{(\lambda)}\in\tilde{\mathcal F}$ because the $r+1$ linear functionals $
\lambda,-\lambda(p_1)\lambda_1,\ldots,-\lambda(p_r)\lambda_r$ satisfy $\lambda(p)-\sum_{i=1}^r \lambda(p_i)\lambda_i(p)=0$ for all $p\in\mathfrak{se}(d)$.
Thus $G_{(\lambda)}$ corresponds to the corrected linear functional
\[
(\lambda)=\lambda-\sum_{i=1}^r \lambda(p_i)\lambda_i,
\]
where $(\lambda)(p)=0$ for all $p\in\mathfrak{se}(d)$. Because $(\lambda)$ vanishes on $\mathfrak{se}(d)$, it has a continuous extension to $\mathcal F$ and we have the reproducing property
\[
(\lambda)(f)
= \langle f,\ G_{(\lambda)}\rangle_{\mathcal F}.
\]
Let $R:\mathcal F\to \mathcal F^{**}$ be the Riesz map $R(f)(\lambda)=\langle f,G_{(\lambda)}\rangle_{\mathcal F}$. Then $R(f)(a\otimes\delta_x)=a^T \left(f(x)-\Pi_{\mathfrak{se}(d)}(f)(x)\right)$ for $f\in\tilde{\mathcal F}$. 
The native space of $\Phi$ is then
\[
\mathcal N_\Phi(\Omega):=R(\mathcal F(\Omega))+\mathfrak{se}(d) 
\]
with the semi-inner product
\[
(f,h)_{\mathcal N_\Phi(\Omega)}
:= \bigl(R^{-1}(f-\Pi_{\mathfrak{se}(d)}f),\ R^{-1}(h-\Pi_{\mathfrak{se}(d)}h)\bigr)_{\mathcal H}.
\]
In section \ref{sec:operator}, we will use this construction with $\Phi$ the Green's function of the screened elasticity operator.

\section{Rigid-motion quotients}
\label{sec:quotient_shape_space}

We here outline how the rigid motion group $\SE(d)$ acts on $\Diff_{\mathcal{A}}(\mathbb R^d)$ and its diagonal action on $\on{Land}^n(\mathbb R^d)$ leading to the quotient shape space. We describe the stratified structure of the shape space, and how the action is mapped to the quotient.

\subsection{The diffeomorphism group modulo the outer automorphism group of rigid motions}
\label{sec:outer_action}
The rigid motion group $\SE(d)$ acts as a group of outer automorphisms on any of the diffeomorphism groups $\Diff_{\mathcal{A}}(\mathbb R^d)$ by conjugation: $\ph\mapsto A\o \ph \o A\i$ for $A\in \SE(d)$. The induced action on the Lie algebra is $X \mapsto R\o X \o A\i$ for  $A(x)=Rx +b$.

\subsection{Landmark space modulo rigid motions}
\label{sec:modSE}
The rigid motion group $\SE(d)$ acts diagonally on landmark space $\on{Land}^n$. This is a proper action. It is polar for the Euclidean metric for $d=1$ and any $n$ or $d\ge2$ and $n=2$, but is not polar for $d\ge2$ and $n\ge3$. For $q\in \on{Land}^n$, the infinitesimal action is the linear map
\begin{gather*}
\zeta_q:\mathfrak{se}(d)\to T_q\on{Land}^n,
\\ 
\zeta_q(\Omega,a):=\frac{d}{dt}\Big|_{t=0}\exp\bigl(t(\Omega,a)\bigr)\cdot q
=(\Omega q_1+a,\ldots,\Omega q_n+a)
\end{gather*}
where $(\Omega,a)\in \mathfrak{se}(d)=\mathfrak{so}(d)\ltimes\mathbb R^d$. We denote the tangent space to the rigid-motion orbit through $q$ by
\[
\mathfrak{se}(d)(q):=\zeta_q\bigl(\mathfrak{se}(d)\bigr)
=
\bigl\{(\Omega q_i+a)_{i=1}^n:(\Omega,a)\in \mathfrak{se}(d)\bigr\}
\subset T_q\on{Land}^n.
\]
and write
\[
\mathfrak{se}(d)(q)^{\mathrm{ann}}
:=
\Bigl\{\al\in T_q^*\on{Land}^n\Big|
\sum_{i=1}^n \al_i(\Omega q_i+a)=0
\;\forall\;(\Omega,a)\in\mathfrak{se}(d)\Bigr\}
\]
for the annihilator of the orbit tangent space, equivalently
$\sum_{i=1}^n \al_i=0$ and $\sum_{i=1}^n (q_i \al_i^T-\al_i q_i^T)=0$.

The action of the translation subgroup is free, and the quotient by all translations is the space of centered landmarks
\begin{equation*}
\on{Land}^n_0:=\on{Land}^n/\mathbb R^d = \{q\in \on{Land}^n: C(q):=\frac1n\sum_i q_i =0\}
\end{equation*}
where $\SO(d)$ is still acting. 
Note that $\on{Land}^n_0$ is not invariant under the action of $\Diff_{\mathcal{A}}(\mathbb R^d)$.

Consider the center mapping, the span mapping, and the rank of a landmark
\begin{align*}
&\bar q = (q_i-C(q))_{i=1}^n = t_{C(q)}^{-1}(q),\quad 
\text{where } t_b \text{ is translation by }b,
\\
&\on{Span}:\on{Land}^n \to \bigcup_{k=1}^d \on{Gr}_n(k),\quad \on{rk}(q) :=\dim(\on{Span}(\bar q)) \in \{1,\dots,n-1\},
\end{align*}
where $\on{Span}$ maps a landmark $q$ to the linear subspace of $\mathbb R^d$ spanned by its centered translate 
$\bar q$.
For a landmark $q$, since $\bar q$ contains a basis of $\on{Span}(\bar q)$, the stabilizer or isotropy group $\SE(d)_q\subset \SE(d)$ is 
\begin{align*}
    \SE(d)_q &= t_{C(q)}.(\SO(\on{Span}(\bar q)^\bot)\x \{\on{Id}_{\on{Span}(\bar q)}\}.t_{C(q)}^{-1}\,,
\end{align*}
which is conjugate to $\SO(d-k)\x \{\on{Id}_k\}$ where $k=\on{rk}(q)$. Thus $\dim(\SE(d)_q) = \dim(\SO(d-k)) = \frac{(d-k)(d-k-1)}{2}$. The conjugacy class of the isotropy subgroup $\SE(d)_q\subset\SE(d)$, also called the orbit type of $q$, depends only on $k$. The orbit space $\on{Land}^n/\SE(d)$ is the following union of strata
\begin{equation}
\overline{\on{Land}}^n := \on{Land}^n(\mathbb R^d)/\SE(d) = \on{Land}^n_0/\on{SO}(d) = \bigcup_{k=1}^{\min\{d,n-1\}} \mathcal{S}_k  
\end{equation}
with $\dim(S_k) = nk -\tfrac12 k(k+1)$. The stratum $\mathcal{S}_k$ is regular (the conjugacy class of the isotropy group is minimal) when $n\ge d$ if $k=d$ or $k=d-1$ since both $\SO(0)$ and $\SO(1)$ are trivial; when $n<d$ if $k=n-1$. The regular stratum is connected, open and dense, see e.g. \cite[6.8 ff]{michorTopicsDifferentialGeometry2008}. Moreover, $\mathcal{S}_{k'}$ is contained in the boundary of $\mathcal{S}_k$ whenever $k'< k$. We denote the projection $\pi:\on{Land}^n\to \overline{\on{Land}}$.
\begin{lemma}
For each $q$, the dual mapping of $T_q\pi: T_q\on{Land}^n \to T_{\pi(q)}\overline{\on{Land}}^n$ factors to an isomorphism 
\[
(T_q\pi)^*: T^*_{\pi(q)}\on{Land}^n/\on{SE}(d)=T^*_{\bar q}\overline{\on{Land}}^n \xrightarrow{\cong} \mathfrak{se}(d)(q)^{\mathrm{ann}} \subset T^*_q\on{Land}^n
\]
\end{lemma}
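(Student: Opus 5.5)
The plan is to reduce the statement to linear algebra. The one non-formal input is the convention for $T_{\pi(q)}\overline{\on{Land}}^n$ at singular points, and I fix it first via the slice theorem.

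\emph{Fixing the tangent space of the orbit space.} The action of $\SE(d)$ on $\on{Land}^n$ is proper (section~\ref{sec:modSE}), so the slice theorem applies at every $q$. A $\SE(d)$-invariant neighbourhood of the orbit $\SE(d)\cdot q$ is equivariantly diffeomorphic to $\SE(d)\times_{\SE(d)_q} S_q$. Here $S_q$ is a slice through $q$ whose tangent space $N_q$ is a complement of $\mathfrak{se}(d)(q)=\zeta_q(\mathfrak{se}(d))$ in $T_q\on{Land}^n$. Locally this gives $\overline{\on{Land}}^n\cong S_q/\SE(d)_q$. I take $T_{\pi(q)}\overline{\on{Land}}^n$ to be the linear space at the base point obtained by differentiating $\pi$, namely
\[
T_{\pi(q)}\overline{\on{Land}}^n := T_q\on{Land}^n/\mathfrak{se}(d)(q)\cong N_q .
\]
This does not depend on the representative $q$: for $g\in\SE(d)$, the derivative $T_qg$ maps $\mathfrak{se}(d)(q)$ onto $\mathfrak{se}(d)(g\cdot q)$, because $\zeta_{gq}=T_qg\circ\zeta_q\circ\on{Ad}_{g^{-1}}$, and $\pi\circ g=\pi$. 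On the regular stratum this agrees with the usual tangent space of the manifold $\mathcal S_k$, since there $\pi$ is a submersion with fibres the orbits. I expect this step to be the main obstacle. Everything else is formal once we accept that at non-regular $q$ the cotangent space of $\overline{\on{Land}}^n$ is defined through $T_q\pi$ in this way.

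\emph{Surjectivity and kernel of $T_q\pi$.} With this convention $T_q\pi$ is surjective by construction, and I still have to check that its kernel is exactly $\mathfrak{se}(d)(q)$. Since $\pi(\exp(t(\Omega,a))\cdot q)=\pi(q)$, differentiating at $t=0$ gives $\zeta_q(\Omega,a)\in\ker T_q\pi$, so $\mathfrak{se}(d)(q)\subseteq\ker T_q\pi$. For the reverse inclusion I use the slice: $\pi$ restricted to $S_q$ is the quotient map by the compact group $\SE(d)_q$, and on $N_q$ it induces the identification $N_q\cong T_q\on{Land}^n/\mathfrak{se}(d)(q)$. Hence no nonzero vector in $N_q$ is killed. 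So $T_q\pi$ induces a linear isomorphism $T_q\on{Land}^n/\mathfrak{se}(d)(q)\cong T_{\pi(q)}\overline{\on{Land}}^n$.

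\emph{Dualizing.} For a surjective linear map $A:V\to W$ of finite-dimensional spaces, the transpose $A^*:W^*\to V^*$ is injective, and its image is the annihilator $(\ker A)^{\mathrm{ann}}$. Applied to $A=T_q\pi$, this shows that $(T_q\pi)^*$ maps $T^*_{\pi(q)}\overline{\on{Land}}^n$ isomorphically onto
\[
\mathfrak{se}(d)(q)^{\mathrm{ann}}=\Bigl\{\al\in T^*_q\on{Land}^n \,\Big|\, \textstyle\sum_i\al_i(\Omega q_i+a)=0\ \forall(\Omega,a)\in\mathfrak{se}(d)\Bigr\}.
\]
Finally I would record the explicit form of the annihilator. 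Testing against $\Omega=0$ gives $\sum_i\al_i=0$. Testing against $a=0$ and all skew $\Omega$ gives $\sum_i\al_i\Omega q_i=\on{Tr}\bigl(\Omega\sum_i q_i\al_i\bigr)=0$ for every skew $\Omega$, which is equivalent to $\sum_i q_i\al_i^T$ being symmetric. That is the condition $\sum_i (q_i\al_i^T-\al_i q_i^T)=0$ stated earlier, and the lemma follows.
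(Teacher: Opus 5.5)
Your proposal is correct and is the paper's argument made explicit: the paper's one-line proof is exactly that $\ker T_q\pi=T_q(\SE(d).q)=\mathfrak{se}(d)(q)$, and the transpose of the surjection $T_q\pi$ is then an isomorphism onto the annihilator of this kernel. Your convention $T_{\pi(q)}\overline{\on{Land}}^n:=T_q\on{Land}^n/\mathfrak{se}(d)(q)$ is the one the paper tacitly needs at singular orbits (with the tangent space of the stratum through $\pi(q)$ the dimensions would not match there), and under it your slice-theorem ``reverse inclusion'' step holds by definition rather than being a further check.
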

\begin{proof}
This follows from $T_q(\on{SE}(d).q) = \mathfrak{se}(d)(q)$.    
\end{proof}

\section{Screened elasticity and quotient metrics}
\label{sec:operator}
To achieve a geometric structure on landmark shape spaces that has the defining properties of both Kendall's shape spaces and LDDMM landmark spaces, we need an operator and connected right-invariant metric on $\Diff_{\mathcal A}(\mathbb R^d)$ that has specific properties. We discuss these properties below before defining the screened elasticity operator.

\subsection{Operators and descending metrics}
\emph{Equivariance}: To define a metric on the quotient space, the operator must be equivariant with respect to the action of the group we factor out.
The rigid motion group $\SE(d)$ is the connected component of the isometry group of $\mathbb R^d$ for the standard Euclidean Riemannian metric. Therefore, every operator which uses only operations derived from the Euclidean metric like the Laplacian, divergence and curl are equivariant with respect to the rigid motion group. 

Several $\SE(d)$-equivariant operators can be used to induce quadratic forms on quotient shape spaces: the Sobolev operators $L_{H^s}$, the Laplacian operators $L_{\Delta^s}$ (see Appendix~\ref{sec:operator}), and the screened elasticity operator $L_s$ introduced below. All of them descend to the quotient, but they induce quadratic forms with different properties. The choice of rigid motion group to factor out is important: The operators are not equivariant with respect to the larger group of affine or similarity transformations. In Appendix~\ref{app:affine-equivariant-operators} we prove that the only linear differential operator that is invariant under affine transformations is of order 0, thus a multiple of the identity.

\emph{Null-space}: The null-space of the operator must contain exactly the infinitesimal rigid motions as we wish to preserve local rigid transformations and induce a metric on the quotient. A smaller null-space will not preserve local rigid transformations as illustrated in Figure~\ref{fig:matching-grid-kernel-comparison}, while a larger null-space will lead to a degenerate quadratic form on the quotient and hence not define a metric. A degenerate quadratic form would imply that shapes differing by e.g. a shearing transformation would have distance zero. In addition, since the operator is not equivariant with respect to the affine group, the distance between a pair of shapes would not be the same as the distance between the pair of shapes after a shearing transformation, even though the shearing transformation induced zero distance.

For the Sobolev operators $L_{H^s}$, the identity term in the operators implies that the induced metric will penalize every non-zero component of the vector field. After factoring out rigid motions, global rigid transformations are removed, but local rigid motions are still penalized.
For the Laplacian operators $L_{\Delta^s}$, the null-space include all affine transformations. Because the null-space is larger than the group factored out, the induced quadratic form is degenerate and does not define a metric.
For the elasticity operator $L_{\on{sym}}$, the null-space is exactly $\mathfrak{se}(d)$. 

\emph{Regularity}: The kernel of the operator must be sufficiently regular to induce a metric that prevents landmarks from colliding.
The kernel of the elasticity operator $L_{\on{sym}}$ has a singularity at $0$ and because of this low regularity, we cannot expect geodesic completeness. The screened elasticity operator $L_s$ is constructed to have the necessary regularity by increasing the order of $L_{\on{sym}}$.

In summary, the Sobolev operators has a too small null-space and thus penalize local rigid motions; the Laplacian has a too large null-space failing to induce a metric; and the elasticity operator has too low order. The screened elasticity operator $L_s$ has all of the desired properties. We now construct $L_s$ below.

\subsection{Screened elasticity}
From the elasticity operator $L_{\on{sym}}$ described in section \ref{sec:elasticity_operator}, we obtain the screened elasticity operator $L_s$ as
\begin{equation*}
     L_s=-(\on{Id} -\sigma^2\De)^{s-1}\o\on{div}\o \on{sym}\nabla   
\end{equation*}
for $s\ge 1$. The elasticity operator $L_{\on{sym}}$ that has order 2 corresponds to $s=1$, and $L_s$ has order $2s$.
The operator inherits the $\mathfrak{se}(d)$ null-space of the elasticity operator. The Green's kernel of $L_s$ is obtained by convolving the Kelvin matrix $\Phi$ with the scalar Sobolev kernel $k_{H^{s-1}}$, i.e.,
\begin{equation*}
    k_s = k_{H^{s-1}} * \Phi
    .
\end{equation*}
Closed form solutions for operators of the form $a(\Delta)L_{\on{sym}}$ for a polynomial $a(\Delta)$ in the Laplacian are identified in \cite{rogulaBasicSolutionsStrain1973}. Their structure is determined by the roots of the polynomial. In the case $s=2$, this polynomial has only the single root $\sigma^{-2}$ and the Green's kernel of the screened elasticity operator is, for $d=2$, 
\begin{align}
&(k_2)_{ij}(x) = 
-\frac{1}{2\pi\mu}\,\delta_{ij} \left(
\log r + K_0(r/\sigma)
\right)
\label{eq:k2_2d}
\\ &\quad
+\left(\frac{1}{2\pi\mu}-\frac{1}{2\pi(\lambda+2\mu)}\right) \partial_i\partial_j \left(
\frac{r^2}{4}\log r - \frac{r^2}{4} + \sigma^2 \left(\log r + K_0(r/\sigma)\right)
\right)
\nonumber
\end{align}
where $K_0$ denotes the modified Hankel function and $r=|x|$.
For $d=3$, the kernel is
\begin{align}
&(k_2)_{ij}(x) =
\frac{1}{4\pi\mu} \delta_{ij} \left( \frac{1}{r} - \frac{e^{-r/\sigma}}{r} \right)
\label{eq:k2_3d}
\\&\quad
+ \left(-\frac{1}{4\pi\mu}+\frac{1}{4\pi(\lambda + 2\mu)} \right)\partial_i \partial_j \left( \frac{r}{2} + \sigma^2 \left( \frac{1}{r} - \frac{e^{-r/\sigma}}{r} \right) \right)
\qquad\qquad\quad
\nonumber
\end{align}
For $s>2$, the polynomial $a(\Delta)$ has multiple roots. As noted in \cite{rogulaBasicSolutionsStrain1973}, the fundamental solutions can then be obtained by perturbing the roots and passing to a limit, or they can be found by an extension of the direct approach pursued in Appendix~\ref{app:A2}.

The smoothing removes the singularity of the Kelvin matrix at $0$ as expected: For $d=3$, $k_s$ is $C^k$ for $k=2(s-1)-2$ and, for $d=2$, $k_s$ is $C^k$ for $k=2(s-1)-1$.
For $r\to\infty$ and $d=3$, $s=2$, the solution differs from the Kelvin matrix in the term
\[
\frac{\sigma^2}{4\pi r^3} \left( 
\frac{1}{\mu}
+ \frac{1}{\lambda + 2\mu} 
\right)
\left( \delta_{ij} - 3\frac{x_i x_j}{r^2} \right)
\] 
plus exponentially decreasing terms, and, for $d=2$, $s=2$, in the term
\[
\frac{\sigma^2}{2\pi r^2} \left( 
\frac{1}{\mu}
+ \frac{1}{\lambda + 2\mu} 
\right)
\left( \delta_{ij} - 2\frac{x_i x_j}{r^2} \right)
\] 
plus exponentially decreasing terms \cite{rogulaBasicSolutionsStrain1973}.
To illustrate vector fields generated by the \(k_2\) kernel, Figure~\ref{fig:k2-kernel-columns} shows the two columns of the kernel matrix evaluated at points in a subset of $\mathbb R^2$.
In particular, $k_s$ does not vanish at infinity for $d=2$, in contrast to the positive definite Sobolev $k_{H^s}$ kernels.

\begin{figure}[t]
    \centering
    \includegraphics[width=.95\textwidth,clip,trim=0 0 0 25]{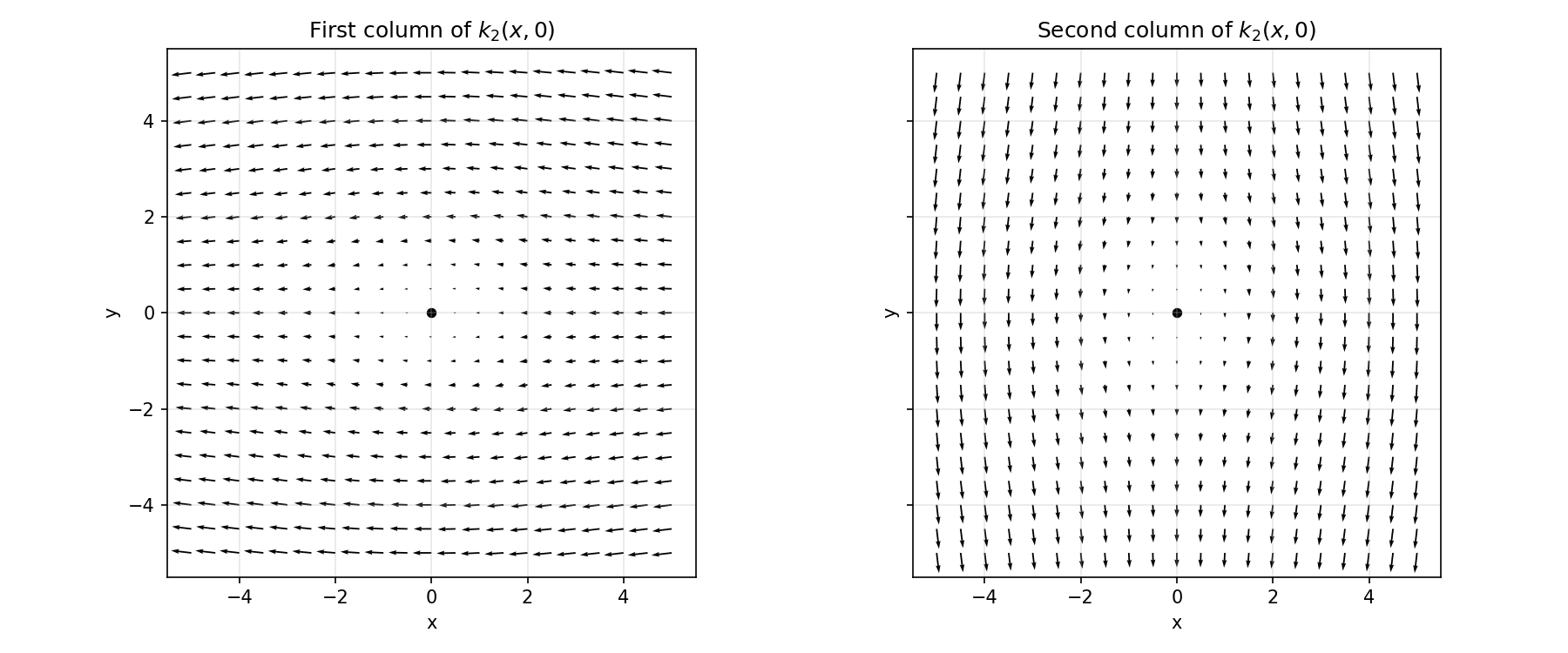}
    \caption{Vector-field visualization of the two columns of the \(k_2\) kernel for $d=2$. Left: the field \(x \mapsto k_2(x,0)e_1\), i.e. the first column of the kernel matrix. Right: the field \(x \mapsto k_2(x,0)e_2\), i.e. the second column. The plots illustrate the directional structure of the \(k_2\) metric and that the fields do not vanish at infinity.}
    \label{fig:k2-kernel-columns}
\end{figure}

\subsection{Interpretation as CPD kernel}
The general CPD construction applies here with the polynomial space replaced by the finite-dimensional space $\mathfrak{se}(d)$ of infinitesimal rigid motions and with the projector $\Pi_{\mathfrak{se}(d)}$ from section~\ref{sec:lagrange_basis}. Thus the screened elasticity kernel $k_s$ is conditionally positive definite with respect to $\mathfrak{se}(d)$: For a landmark configuration $q=(q_1,\ldots,q_n)$, the quadratic form
\begin{equation}
(\al,\be)\mapsto \sum_{i,j=1}^n \langle\al_i,k_s(q_i-q_j)\be_j\rangle =: K_q(\al,\be)
\label{eq:ks_quadratic_form}
\end{equation}
is positive definite on  covectors satisfying
\[
\sum_{i=1}^n \langle\al_i,X(q_i)\rangle=0,
\qquad
\sum_{i=1}^n \langle\be_i,X(q_i)\rangle=0
\qquad
\text{for all }X\in\mathfrak{se}(d).
\]

\subsection{Metrics  on landmark quotients}
\label{ssec:metric_landmark_quotient}
The family of vector subspaces $(\mathfrak{se}(d)(q)^{\mathrm{ann}})_{q\in\on{Land}^n}$ of $T^*\on{Land}^n$ is vector subbundle of different dimensions over each orbit stratum. The kernel quadratic form 
$K_q$ defined in \eqref{eq:ks_quadratic_form} is strictly positive definite on $\mathfrak{se}(d)(q)^{\mathrm{ann}}$ for each $q\in \on{Land}^n$. By the $\on{SE}(d)$-invariance of $k_s$, for each $g\in\on{SE}(d)$ the cotangent map 
$$
T^*_{q}(\on{action}_g): T^*_{g.q}\on{Land}^n \to T^*_{q}\on{Land}^n
$$ 
restricts to an isometry 
$$(\mathfrak{se}(d)(g.q)^{\mathrm{ann}}, K_{g.q}) \to (\mathfrak{se}(d)(q)^{\mathrm{ann}},K_q).$$ 
Recall from section \ref{sec:modSE} that 
$(T_q\pi)^*: T^*_{\pi(q)}\overline{\on{Land}}^n \xrightarrow{\cong} \mathfrak{se}(d)(q)^{\mathrm{ann}}$ is an isomorphism.
Therefore the family of positive definite quadratic forms $(K_q)_{q\in\on{Land}^n}$ induces a stratified cometric $\bar K$ on $T^*\overline{\on{Land}}^n$ whose inverse $\bar K^{-1}$ is a stratified Riemannian metric on 
$\overline{\on{Land}}^n$. We denote this metric $g^{L_s}$.

Let $\pi(q)=\bar q$ and  $\bar v\in T_{\bar q}\overline{\on{Land}}^n$ with $\bar v=\bar{\al}^\flat$ for an $\bar{\al}$ with $(T_q\pi)^*(\bar{\al})=\alpha\in \mathfrak{se}(d)(q)^{\mathrm{ann}}$ and $\,^\flat$ the $g^{L_s}$ Riemannian flat-map. Define the vector field $v^{\on{hor}}(x)=\sum_{i=1}^n k_s(x-q_i)\alpha_i$ on $\R^d$. By construction, $v^{\on{hor}}$ lies in the native space $\mathcal F$ described in \ref{sec:lagrange_basis}, and we have
$$\|\bar v\|_{\bar K^{-1}}^2=K_q(\al,\al)^2=\|v^{\on{hor}}\|_{\Phi}^2=\int_{\R^d}\langle L_s v^{\on{hor}}, v^{\on{hor}}\rangle dx$$ 
so $v^{\on{hor}}$ can be considered a horizontal lift of $\bar v$. The semi-inner product on $\mathcal F$ can be extended to a full RKHS inner product as mentioned in Appendix~\ref{sec:cpd_kernels} so that $v^{\on{hor}}$ is orthogonal to the vertical subspace of infinitesimal rigid motions and thus a horizontal lift. The horizontal lift has the regularity of $k_s$.

\subsection{Geodesics}\label{ssection:geodesics}

The screened elasticity kernel $k_s$ gives a Hamiltonian
\begin{equation}
    H(q,\al)
    =\frac12 K_q(\al,\al)
    =\frac12\sum_{i,j=1}^n \langle \al_i, k_s(q_i-q_j)\al_j\rangle
    \label{eq:ks_Hamiltonian}
\end{equation}
on the cotangent bundle $T^*\on{Land}^n$ similarly to the case for standard positive definite metrics.
If the initial momentum is a pullback of a covector in $T^*_{\pi(q)}\big(\overline{\on{Land}}^n\big)$, then it is in $\mathfrak{se}(d)(q)^{\mathrm{ann}}$, and, as shown in the proposition below, it remains in $\mathfrak{se}(d)(q)^{\mathrm{ann}}$ along the Hamiltonian flow.
\begin{proposition}
    \label{prop:momentum_map_preserved}
For $(\Omega,a)\in \mathfrak{se}(d)$, define
\[
J_{(\Omega,a)}(q,\al):=\sum_{i=1}^n \al_i(\Omega q_i+a)\in \mathbb R.
\]
Then the function $J_{(\Omega,a)}$ is preserved along the Hamiltonian flow of $H$. In particular, if
$\al(0)\in \mathfrak{se}(d)(q(0))^{\mathrm{ann}}$, then $\al(t)\in \mathfrak{se}(d)(q(t))^{\mathrm{ann}}$ for all $t$ for which the flow exists.
\end{proposition}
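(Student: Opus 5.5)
This is Noether's theorem for the diagonal $\SE(d)$-action: $J_{(\Omega,a)}$ is the momentum map of the cotangent lift, and $H$ is $\SE(d)$-invariant. I will verify it directly by computing $\{J_{(\Omega,a)},H\}=0$, or equivalently $\frac{d}{dt}J_{(\Omega,a)}(q(t),\al(t))=0$ along Hamilton's equations.

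\textbf{Hamilton's equations.} These read
\[
\dot q_i=\p_{\al_i}H=\sum_j k_s(q_i-q_j)\al_j,\qquad \dot\al_i=-\p_{q_i}H .
\]
The time derivative splits into two pieces:
\[
\frac{d}{dt}J=\sum_i \dot\al_i(\Omega q_i+a)+\sum_i \al_i\,\Omega\dot q_i
= -\,dH_q(\al)\bigl(\zeta_q(\Omega,a)\bigr)+\sum_i \langle \al_i,\Omega \dot q_i\rangle .
\]
The first term is minus the derivative of $H$ in $q$ along the infinitesimal action $\zeta_q(\Omega,a)=(\Omega q_i+a)_i$, with $\al$ held fixed.

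\textbf{Invariance of $H$.} Using $\SE(d)$-equivariance of $k_s$, i.e. $k_s(Rx)=Rk_s(x)R^{T}$ for $R\in\SO(d)$ together with translation invariance (as used in the section on metrics on landmark quotients), one has
\[
H(gq,\,R\al)=H(q,\al),
\]
where the covector transforms as $\al_i\mapsto R\al_i$ (identifying $\al_i$ with vectors via the Euclidean inner product).

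Now differentiate $H(\exp(t(\Omega,a))q,\,e^{t\Omega}\al)=H(q,\al)$ at $t=0$. This gives
\[
dH_q(\al)\bigl(\zeta_q(\Omega,a)\bigr)+\sum_i\langle \p_{\al_i}H,\Omega\al_i\rangle=0 .
\]
Since $\p_{\al_i}H=\dot q_i$ and $\Omega$ is skew, the second term equals $-\sum_i\langle \Omega\dot q_i,\al_i\rangle$. This exactly cancels the first term in $\frac{d}{dt}J$, so $\frac{d}{dt}J=0$.

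\textbf{Main obstacle.} The care is in the rotation part: checking $k_s(Rx)=Rk_s(x)R^T$. This follows from the equivariance of $L_s$ (Laplacian, divergence and $\on{sym}\nabla$ commute with rigid motions). Alternatively, it can be read off the explicit form of $k_s$ as $\delta_{ij}f(r)+\p_i\p_j g(r)$, whose derivatives in $x$ transform correctly. One must also keep the sign conventions for the cotangent lift consistent. Translations are easy: $H$ depends only on the differences $q_i-q_j$, so its derivative along $(a,\dots,a)$ vanishes.

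\textbf{Second claim.} Since $J_{(\Omega,a)}$ is constant for every $(\Omega,a)$, and $\al\in\mathfrak{se}(d)(q)^{\mathrm{ann}}$ exactly when $J_{(\Omega,a)}(q,\al)=0$ for all $(\Omega,a)$, membership in the annihilator holds at $t=0$ and hence for all $t$ for which the flow exists.
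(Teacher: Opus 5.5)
Your proof is correct and follows the same route as the paper: $J_{(\Omega,a)}$ is the momentum map of the cotangent-lifted diagonal $\SE(d)$-action, and $\SE(d)$-invariance of $H$ (from $k_s(Rx)=Rk_s(x)R^{\mathsf T}$ and translation invariance) gives $\{J_{(\Omega,a)},H\}=0$. The paper just cites Noether's theorem here, whereas you carry out the cancellation explicitly by differentiating $H(\exp(t(\Omega,a))q,e^{t\Omega}\al)=H(q,\al)$ and using the skew-symmetry of $\Omega$.
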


\begin{proof}
The function $J_{(\Omega,a)}$ is the momentum map for the cotangent-lifted diagonal action of $\SE(d)$ on $\on{Land}^n$. Since $k_s$ is $\SE(d)$-equivariant, the Hamiltonian $H$ is $\SE(d)$-invariant, hence the Poisson bracket
\[
\{J_{(\Omega,a)},H\}=0
\]
for every $(\Omega,a)\in\mathfrak{se}(d)$. Therefore $J_{(\Omega,a)}$ is constant along the Hamiltonian flow by Noether's theorem.
\end{proof}

The Hamiltonian flow lines on $T^*\on{Land}^n$ which start with initial momenta in $\mathfrak{se}(d)(q(0))^{\mathrm{ann}}$ project to horizontal geodesics on $\on{Land}^n$ and hence to geodesics on $\overline{\on{Land}}^n$ as long as they do not cross more singular strata.

In the LDDMM setting with positive definite $C^1$-kernels, the landmarks will not collide along a Hamiltonian  trajectory and the landmark manifold is geodesically complete \cite{trouveMetamorphosesLieGroup2005,bauerOverviewGeometriesShape2014,habermannCharacterizationGeodesicCompleteness2025a}. In the present case, when the kernel is $C^1$, for a Hamiltonian trajectory with $\alpha(0)\in\mathfrak{se}(d)(q(0))^{\mathrm{ann}}$, the horizontal velocity
\begin{equation*}
v_t^{\mathrm{hor}}(x)
=
\sum_{i=1}^n k_s(x-q_i(t))\alpha_i(t)
\end{equation*}
will belong to $C_b^1(\mathbb R^d,\mathbb R^d)$, and it integrates to a flow of $C^1$-diffeomorphisms. Moreover, with $m_t=\sum_i\alpha_i(t)\delta_{q_i(t)}$, the pair $(v_t^{\mathrm{hor}},m_t)$ satisfies the Euler-Poincar\'e equations EPDiff in the distributional sense for the right-invariant quadratic form induced by $L_s$. Since the screening removes the singularity of the Kelvin matrix at the origin, we saw that $k_s$ is $C^1$ with $s\ge\frac52$ for $d=3$ and $s\ge 2$ for $d=2$, i.e. for $s\ge 1+\frac d2$. In these cases, the Hamiltonian trajectory will exist for all time and we have the following result.
\begin{theorem}
\label{thm:non-collision}
Let $d\in\{2,3\}$ and $s\ge1+\frac d2$. If
$q(0)\in\on{Land}^n(\mathbb R^d)$ and
$\alpha(0)\in\mathfrak{se}(d)(q(0))^{\mathrm{ann}}$, then the
Hamiltonian trajectory $(q(t),\alpha(t))$ exists for all time and 
the landmarks will not collide along the trajectory.
\end{theorem}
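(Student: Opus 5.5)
The plan is to reduce the statement to the standard LDDMM argument: conservation of the Hamiltonian gives an a priori bound on the horizontal vector field in $C^1_b$, and that bound then yields both global existence and non-collision.

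\textbf{Step 1 (conservation).} By Proposition~\ref{prop:momentum_map_preserved}, $\alpha(t)\in\mathfrak{se}(d)(q(t))^{\mathrm{ann}}$ for as long as the flow exists. The Hamiltonian \eqref{eq:ks_Hamiltonian} is also conserved, since it is autonomous. Hence
\[
E:=K_{q(t)}(\alpha(t),\alpha(t))=\|v^{\mathrm{hor}}_t\|^2_{\Phi}
\]
is constant in time. Because $\alpha(t)$ annihilates $\mathfrak{se}(d)$, the CPD seminorm equals the full RKHS norm of the horizontal field $v_t^{\mathrm{hor}}$ (Section~\ref{ssec:metric_landmark_quotient}).

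\textbf{Step 2 (a priori bound).} Next I will bound the $C^1_b$-norm of $v^{\mathrm{hor}}_t$ in terms of $E$, with constants independent of $q$. First I would check that $v_t^{\mathrm{hor}}$ is bounded. For $d=2$ the kernel $k_s$ itself grows like $\log r$, but for momenta in the annihilator the growing terms cancel: the zero-sum condition removes $\delta_{ij}\log r$ up to $O(1/r)$. Then I would use the reproducing property $(\lambda)(f)=\langle f,G_{(\lambda)}\rangle$ with $\lambda=a\otimes\delta_x-a\otimes\delta_y$ and with difference quotients. This gives
\[
|v(x)-v(y)-\Pi_{\mathfrak{se}(d)}\text{-part}|\le \|v\|\,\|G_{(\lambda)}\|,
\]
and $C^1$ regularity of $k_s$ (valid for $s\ge 1+d/2$) gives a uniform Lipschitz estimate $\|Dv^{\mathrm{hor}}_t\|_\infty\le C\sqrt E$. \emph{I expect this to be the main obstacle.} The native space only controls $v$ modulo $\mathfrak{se}(d)$, and a rigid component with nonzero $\Omega$ grows linearly. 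So I must show that the horizontal field has no hidden rigid part. I would do this by computing the asymptotics of $\sum_i k_s(x-q_i)\alpha_i$ with the annihilator conditions $\sum\alpha_i=0$ and $\sum (q_i\alpha_i^T-\alpha_iq_i^T)=0$, using the far-field expansions of $k_s$ recalled above. If only a Lipschitz bound (linear growth) is obtainable, that still suffices for the next step.

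\textbf{Step 3 (existence and non-collision).} The landmarks satisfy $\dot q_i=v^{\mathrm{hor}}_t(q_i)$. With $\mathrm{Lip}(v_t)\le C\sqrt E$ for all $t$, Gr\"onwall gives
\[
e^{-C\sqrt E\,t}|q_i(0)-q_j(0)|\le |q_i(t)-q_j(t)|\le e^{C\sqrt E\,t}|q_i(0)-q_j(0)|.
\]
So distinct landmarks never collide in finite time and $q$ cannot escape to infinity in finite time. Equivalently, $q_i(t)=\varphi_t(q_i(0))$ where $\varphi_t$ is the flow of $C^1$-diffeomorphisms generated by $v^{\mathrm{hor}}$, and the injectivity of $\varphi_t$ gives non-collision directly.

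For the momenta, $\dot\alpha_i=-\partial_{q_i}H$ is linear in $\alpha$, with coefficients given by $Dk_s$ evaluated at the differences $q_i-q_j$, which stay in a compact set. This gives at most exponential growth of $\alpha$.

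Hence the solution stays in a compact subset of $T^*\on{Land}^n$ on every bounded time interval. Since $k_s\in C^1$, the Hamiltonian vector field is locally Lipschitz away from collisions, so the standard extension theorem for ODEs yields existence for all time.
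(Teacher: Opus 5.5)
\paragraph{The uniform Lipschitz bound in Step 2 fails at the endpoint.} The theorem includes $s=1+\frac d2$, and for $d=2$ this is exactly the kernel $k_2$ used throughout the paper. At this exponent $k_s$ is $C^1$, but its second derivatives are logarithmically singular at the origin. For unit $e$ one has $e^{\mathsf T}\hat k_s(\xi)e\asymp|\xi|^{-2}(1+\sigma^2|\xi|^2)^{1-s}$, and hence, when $2(s-1)=d$,
\[
2e^{\mathsf T}\bigl(k_s(0)-k_s(re)\bigr)e\asymp\int\bigl(1-\cos(r\,e\cdot\xi)\bigr)\,e^{\mathsf T}\hat k_s(\xi)e\,d\xi\asymp r^2\log(1/r)\qquad(r\to0),
\]
because $\int_{|\xi|<1/r}|\xi|^2\,e^{\mathsf T}\hat k_s(\xi)e\,d\xi$ diverges logarithmically. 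So no bound $\|Dv^{\mathrm{hor}}_t\|_\infty\le C\sqrt E$ with $C$ independent of $q$ exists.

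Here is a concrete configuration:
\begin{itemize}
\item take two landmarks at distance $r$;
\item let $\beta$ have entries $e_{12}=(q_1-q_2)/r$ at $q_1$ and $-e_{12}$ at $q_2$ (this $\beta$ lies in $\mathfrak{se}(d)(q)^{\mathrm{ann}}$);
\item set $\alpha=\beta/\sqrt{K_q(\beta,\beta)}$.
\end{itemize}
Then $E=1$, while $\langle e_{12},v(q_1)-v(q_2)\rangle=K_q(\beta,\alpha)=\sqrt{K_q(\beta,\beta)}\asymp r\sqrt{\log(1/r)}$. The Lipschitz ratio of $v^{\mathrm{hor}}$ therefore blows up as $r\to0$.

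$C^1$ regularity of the kernel only gives a Lipschitz constant for each fixed $(q,\alpha)$. That constant degenerates precisely as landmarks approach each other, which is what you need to exclude. Your remark about the flow $\varphi_t$ is circular for the same reason: injectivity of $\varphi_t$ for $t<T$ does not prevent $r_{ij}(t)\to0$ as $t\to T$. As written, your Gr\"onwall argument proves the theorem only for $s>1+\frac d2$.

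\paragraph{What is missing: an Osgood-type argument in place of Gr\"onwall.} This is the route the paper takes. Let $r_{ij}=|q_i-q_j|$ and let $\beta^{ij}$ be built as above for the pair $(i,j)$. Then $\dot r_{ij}=K_q(\beta^{ij},\alpha)$. Cauchy--Schwarz on the annihilator gives
\[
|\dot r_{ij}|\le\sqrt{E\,K_q(\beta^{ij},\beta^{ij})}\le Cr_{ij}\sqrt{1+|\log r_{ij}|}
\]
for small $r_{ij}$. Since $\int_0\frac{dr}{r\sqrt{1+|\log r|}}=\infty$, collision in finite time is excluded. The growth bound $\|k_s(x)\|\le C(1+\log(1+|x|))$ for $d=2$, and boundedness for $d=3$, excludes escape to infinity.

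Working only with the radial component also removes your ``main obstacle''. The functional $v\mapsto\langle e,v(x)-v(y)\rangle$ with $e$ parallel to $x-y$ vanishes on $\mathfrak{se}(d)$ by skew-symmetry of $\Omega$. So you never need to control a rigid part of $v^{\mathrm{hor}}_t$.

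\paragraph{The momentum step is misstated.} $\partial_{q_i}H=\sum_j\langle\alpha_i,Dk_s(q_i-q_j)(\cdot)\,\alpha_j\rangle$ is quadratic in $\alpha$, not linear. It is linear in $\alpha_i$ only with coefficient $Dv^{\mathrm{hor}}_t(q_i)$, which is not a priori bounded at the endpoint. Instead, bound $\alpha$ using conservation of $H$ together with a uniform lower bound for $K_q$ on unit covectors in $\mathfrak{se}(d)(q)^{\mathrm{ann}}$ with $q$ in a compact subset of $\on{Land}^n$. That set is compact because the annihilator conditions are closed. Then invoke ODE continuation.
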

The theorem is proved in Appendix~\ref{app:non-collision}.

\section{Scale normalization}
\label{sec:scale}
Since the screened elasticity operator is not invariant under rescaling, we here pursue the same approach as in Kendall's shape space to restrict a to fixed-scale sphere. Recall the centroid $C(q)=\sum_i \frac1n q_i$ and the radius function $R(q) = \sum_i |C(q)-q_i|^2$. 
The radius function is invariant for the diagonal action of $\SE(d)$ and thus factors to a function 
$\bar R:\overline{\on{Land}}^n\to \mathbb R$. The space 
\begin{equation*}
\overline{\on{Land}}^n_1(\mathbb R^d) =
\big\{\bar q\in \overline{\on{Land}}^n\mid \bar R(\bar q)=1\big\}\xhookrightarrow{\ }\overline{\on{Land}}^n
\end{equation*}
is a Whitney-stratified subspace with smooth strata which are submanifolds of the strata of $\overline{\on{Land}}^n$. The constraint $\bar R(\bar q)=1$ is a regular equation on each stratum. 

Consider the subspace Riemannian metric on $\overline{\on{Land}}^n_1(\mathbb R^d)$ inherited from $g^{L_s}$. Its inverse defines a Hamiltonian function on the cotangent bundle $T^*\overline{\on{Land}}^n_1(\mathbb R^d)$, and we have the following result.
\begin{proposition}
On the quotient space $\overline{\on{Land}}^n$, restrict the metric $g^{L_s}$ to $ g_1$ on $\overline{ \on{Land}}^n_1(\mathbb R^d)$. The inverse $g_1^{-1}$ then defines a Hamiltonian function on $T^*\overline{\on{Land}}^n_1(\mathbb R^d)$ whose flowlines project to $g_1$-geodesics on $\overline{ \on{Land}}^n_1(\mathbb R^d)$.    
\end{proposition}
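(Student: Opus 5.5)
The plan is to work stratum by stratum, since $\overline{\on{Land}}^n_1(\mathbb R^d)$ is only stratified. On each stratum it is a smooth submanifold, so the statement reduces to a standard fact of Riemannian geometry. Fix a stratum $\mathcal S_k$ and its submanifold $\mathcal S_k\cap \overline{\on{Land}}^n_1$, which is smooth because $\bar R=1$ is a regular equation on each stratum. The metric $g^{L_s}$ restricted to $\mathcal S_k$ is a smooth Riemannian metric: it is the inverse of the cometric $\bar K$ obtained from the positive definite forms $K_q$ on $\mathfrak{se}(d)(q)^{\mathrm{ann}}$, via the isomorphism $(T_q\pi)^*$ of the Lemma in section~\ref{sec:modSE}. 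Pulling it back along the inclusion gives a smooth Riemannian metric $g_1$ on the submanifold.

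First, I will describe $g_1^{-1}$ concretely. Let $N=\{\bar q:\bar R(\bar q)=1\}$ inside a stratum $M$. The cotangent space $T^*_{\bar q}N$ is the quotient $T^*_{\bar q}M/\mathbb R\, d\bar R(\bar q)$. The cometric $g_1^{-1}$ on this quotient is $\bar K$ restricted to the $\bar K$-orthogonal complement of $d\bar R$. Equivalently, it is $\bar K(\beta-c\,d\bar R,\beta-c\,d\bar R)$ with $c=\bar K(\beta,d\bar R)/\bar K(d\bar R,d\bar R)$, which is independent of the representative $\beta$. Since $d\bar R\neq0$ by regularity of the equation, this expression is smooth. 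So $H_1(\bar q,[\beta])=\tfrac12 g_1^{-1}([\beta],[\beta])$ is a smooth function on $T^*N$, and it is the fiberwise quadratic form dual to $g_1$.

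Second, I will invoke the standard fact: for any smooth Riemannian manifold $(N,g_1)$, the Hamiltonian flow of $H_1=\tfrac12 g_1^{-1}$ with respect to the canonical symplectic form on $T^*N$ is conjugate, via the Legendre map $\flat_{g_1}$, to the geodesic spray of $g_1$. Hence its flow lines project to $g_1$-geodesics. One can see this either by checking that the Legendre transform of $L(v)=\tfrac12 g_1(v,v)$ is $H_1$, or by noting that Hamilton's equations $\dot q=\partial_\beta H_1$, $\dot\beta=-\partial_q H_1$ are the Euler--Lagrange equations of the energy.

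For computations, one can alternatively present this as the Hamiltonian of the constrained system on $T^*M$, $H=\tfrac12\bar K$, with a Lagrange multiplier enforcing $\bar R=1$. The flow then follows $\dot{\bar q}=\bar K^\sharp(\beta-c\,d\bar R)$, which is tangent to $N$. This matches the Kendall-style projection onto the sphere and connects with the unconstrained Hamiltonian of \eqref{eq:ks_Hamiltonian}, lifted to $T^*\on{Land}^n$ using Proposition~\ref{prop:momentum_map_preserved}.

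The main obstacle is the stratification. The statement is only meaningful stratum by stratum, so I will take the conclusion to hold for flow lines as long as they stay within one stratum, mirroring the caveat after Proposition~\ref{prop:momentum_map_preserved}. Smoothness of $\bar K$ on each stratum needs the dimension of $\mathfrak{se}(d)(q)^{\mathrm{ann}}$ to be locally constant along the stratum. That holds because the orbit type, and hence $\dim \mathfrak{se}(d)(q)$, depends only on $\on{rk}(q)$.
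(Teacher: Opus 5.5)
Your proposal is correct and takes essentially the paper's route. The paper gives no argument beyond the standard fact that $\tfrac12 g_1^{-1}$ on the smooth level set $\{\bar R=1\}$ of each stratum is the cogeodesic Hamiltonian, made concrete through the normal $\mathfrak n(\bar q)$, the constrained gradient $N(q)$ (your $\bar K$-orthogonal complement of $d\bar R$), and the holonomic Lagrange-multiplier system, as you do.

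One detail to tighten: on a non-regular stratum, $\mathfrak{se}(d)(q)^{\mathrm{ann}}$ is strictly larger than the cotangent space of the stratum. There the stratum's cometric is $\bar K$ restricted to the $\SE(d)_q$-invariant covectors rather than $\bar K$ itself, a point the paper also leaves implicit.
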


Let $\mathfrak n(\bar q)$ be the Riemannian normal to $T_{\bar q}\overline{\on{Land}}^n_1$ in $T_{\bar q}\overline{\on{Land}}^n$.
For $\bar q\in \overline{\on{Land}}^n_1(\mathbb R^d)$, we have 
\begin{align*}
T_{\bar q}\overline{\on{Land}}^n &= T_{\bar q}\overline{\on{Land}}^n_1(\mathbb R^d) \oplus \mathbb R.\mathfrak n(\bar q).
\end{align*}
Let $N(q)$ be a tangent vector satisfying the $\mathfrak{se}(d)$-constraints which projects to $\mathfrak{n}(\bar q)$ up to rescaling in $T_{\bar q}\overline{\on{Land}}^n$. Then $N(q)$ is the constrained $K_q^{-1}$-gradient of the function $R$. An expression for it is derived in Appendix~\ref{app:scale}.

Contrary to the CPD constraints which are preserved by the Hamiltonian flow, the Hamiltonian flow on $T^*_{\bar q}\on{Land}^n$ will not generally preserve tangentiality to $T^*_{\bar q}\on{Land}_1^n(\mathbb R^d)$ because the Poisson bracket between evaluation on $N(q)$ and the Hamiltonian does not vanish, see Appendix~\ref{app:scale}. However, since $R(\bar q)=1$ is a holonomic constraint, we can solve the geodesic equations directly as a constrained Hamiltonian system by introducing a Lagrange multiplier, see \cite[section 8.3]{marsdenIntroductionMechanicsSymmetry1999}.
The resulting constrained Hamiltonian system is given by the equations
\begin{align*}
    \dot q &= \partial_\alpha H(q,\alpha), \\
    \dot\alpha &= -\partial_q H(q,\alpha)-\lambda\,dR(q),
\end{align*}
with $R(q_0)=1$ and $\alpha(N(q_0))=0$.

\section{Numerical implementation}
\label{sec:implementation}
To integrate the geodesic equations numerically, we use automatic differentiation for the kernel and Hamiltonian equations. While the CPD constraints are preserved exactly in the continuous system when satisfied initially, numerical integration will generally violate them. Additionally, in the scale-normalized case, we need $R$ to be preserved when integrating the constrained Hamiltonian system. We therefore use a RATTLE \cite{andersenRattleVelocityVersion1983} type method to preserve the constraints approximately. The scheme is described below.

For matching as in section~\ref{sec:shape_matching}, we use a rigid-motion invariant discrepancy between two configurations $q,q'\in(\mathbb R^d)^n$ based on the pairwise distance matrices with entries
\begin{equation*}
    D(q)_{ij} = \|q_i-q_j\|,
\end{equation*}
and define the loss
\begin{equation}
    \mathcal L(q,q')=\|D(q)-D(q')\|_F^2
    \label{eq:rigid_motion_invariant_loss}
\end{equation}
using the Frobenius norm. The discrepancy being $\SE(d)$ invariant in both $q$ and $q'$ individually implies that we do not need to align the shapes prior or during the matching process. For visual purposes, it can be beneficial to align the shapes after matching to access the results.

We can obtain gradients with respect to $p_0$ using automatic differentiation through the time integration, and the optimization can be gradient-based using e.g. Adam for the optimization together with projection of the iterates to the rigid-motion constraint subspace. 

\subsection{Constraint-preserving time integration}
To integrate the constrained Hamiltonian system numerically, we use a RATTLE-type method adapted to the rigid-motion constraints on the landmark momentum. The aim is to preserve the constraints through the integration.

Given \((q^k,p^k)\), we let each step with step size $\Delta t$ be defined as follows:

\begin{algorithm}
    \caption{Hamiltonian integration step}
    \label{alg:projected-stormer-verlet}
    \begin{algorithmic}[1]
    \Require Current state $(q^k,p^k)$, time step $\Delta t$
    
    \State Compute $p^{k+\frac12}$ implicitly and project the half-step momentum to the constraint subspace at $q^k$:
    \begin{align*}
    &p^{k+\frac12}
    =
    p^k - \frac{\Delta t}{2}\,
    \partial_q H(q^k,p^{k+\frac12})
    \\
    &
    p^{k+\frac12}
    \gets
    \Pi^\perp_{\mathfrak{se}(d)}(q^k)\,p^{k+\frac12}
    \end{align*}
    
    \State Compute $q^{k+1}$ implicitly:
    \[
    q^{k+1}
    =
    q^k + \frac{\Delta t}{2}
    \Bigl(
    \partial_p H(q^k,p^{k+\frac12})
    +
    \partial_p H(q^{k+1},p^{k+\frac12})
    \Bigr)
    \]
    
    \State Compute $p^{k+1}$ by the final half-step momentum update and project the updated momentum to the constraint subspace at $q^{k+1}$:
    \begin{align*}
    &p^{k+1}
    =
    p^{k+\frac12}
    -
    \frac{\Delta t}{2}\,
    \partial_q H(q^{k+1},p^{k+\frac12})
    \\
    &
    p^{k+1}
    \gets
    \Pi^\perp_{\mathfrak{se}(d)}(q^{k+1})\,p^{k+1}
    \end{align*}
    
    \State \Return $(q^{k+1},p^{k+1})$
    \end{algorithmic}
    \end{algorithm}

The orthogonal projection onto the annihilator of the rigid-motion basis $\Pi^\perp_{\mathfrak{se}(d)}(q)$ is concretely
\begin{equation*}
\Pi^\perp_{\mathfrak{se}(d)}(q)
=
I - B(q)\bigl(B(q)^TB(q)\bigr)^{-1}B(q)^T.
\end{equation*}
where \(B(q)\) is the matrix whose columns are the basis vectors
\[
p_i(x)=e_i,
\qquad
p_{ij}(x)=x_j e_i-x_i e_j,
\]
evaluated at the landmarks and flattened into \(\mathbb R^{nd}\).

The two implicit subproblems can be solved in practice by a small fixed number of iterations. The scheme improves both the preservation of the rigid-motion constraints and the Hamiltonian compared to a simple Euler or projected leapfrog method.

\section{Experiments}
\label{sec:experiments}
We now perform numerical experiments to illustrate geodesic evolution as obtained by integrating the Hamiltonian equations, shooting-based matching using the quotient metric, the local rigid motion preservation of the $k_2$ kernel, and the effect of scale normalization.

\subsection{Constrained Hamiltonian evolution}
We first illustrate in Figure~\ref{fig:landmark_hamiltonian_demo} constrained Hamiltonian evolution using random initial momentum in the constraint subspace. As can be seen from the plot, both the Hamiltonian and the constraints are preserved through the evolution with different magnitudes of the initial momentum. The integration is performed on the time interval $[0,2]$ with 500 steps.
\begin{figure}[t]
    \centering
    \includegraphics[width=1.\textwidth]{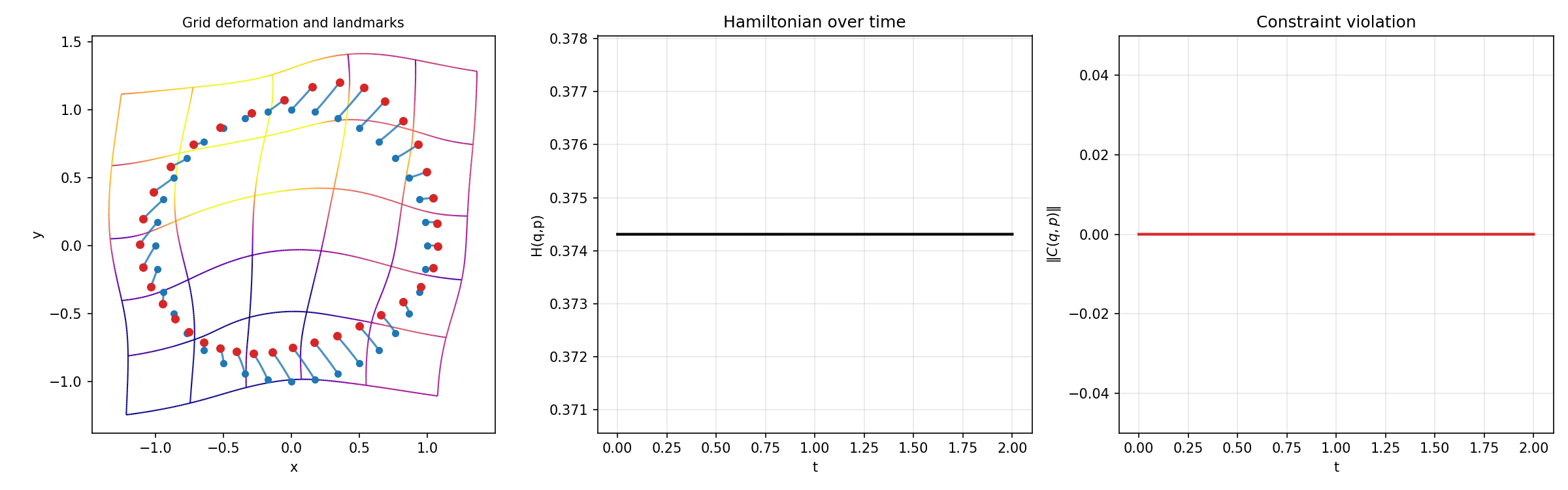}
    \includegraphics[width=1.\textwidth]{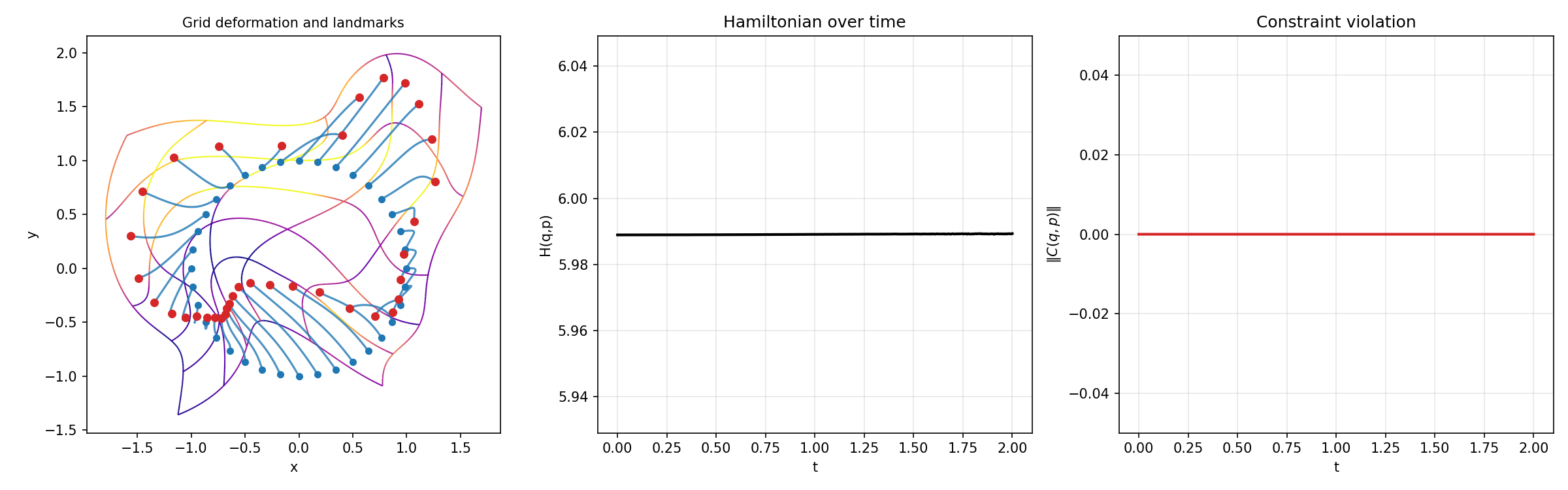}
    \caption{Landmark geodesics with the $k_2$ kernel solved numerically by integrating the Hamiltonian flow. Landmarks initially in a circular configuration (blue points) and random momentum projected to the rigid-motion constraint subspace. Bottom row has $4\times$ larger initial momentum than top row leading to larger deformation.
    Left: Landmark configuration evolution and induced warp of the domain displayed on the underlying grid. Center and right: The Hamiltonian and the constraints are approximately preserved through the integration.}
    \label{fig:landmark_hamiltonian_demo}
\end{figure}

\subsection{Forwards and backwards integration}
To illustrate the stability of the numerical method, in Figure~\ref{fig:landmark_hamiltonian_demo_backwards} we again perform forwards integration with random initial momentum to obtain a final configuration $(q_T,p_T)$. We then start the integration with initial conditions $(q_T,-p_T)$ and simulate the flow again, thus obtaining the backwards flow. The backwards flow can be seen to precisely recover the initial configuration, a consequence of the stability of the numerical scheme. The experiment is performed with a very large initial momentum to demonstrate the stability of the scheme.
\begin{figure}[t]
    \centering
    \includegraphics[width=1.\textwidth]{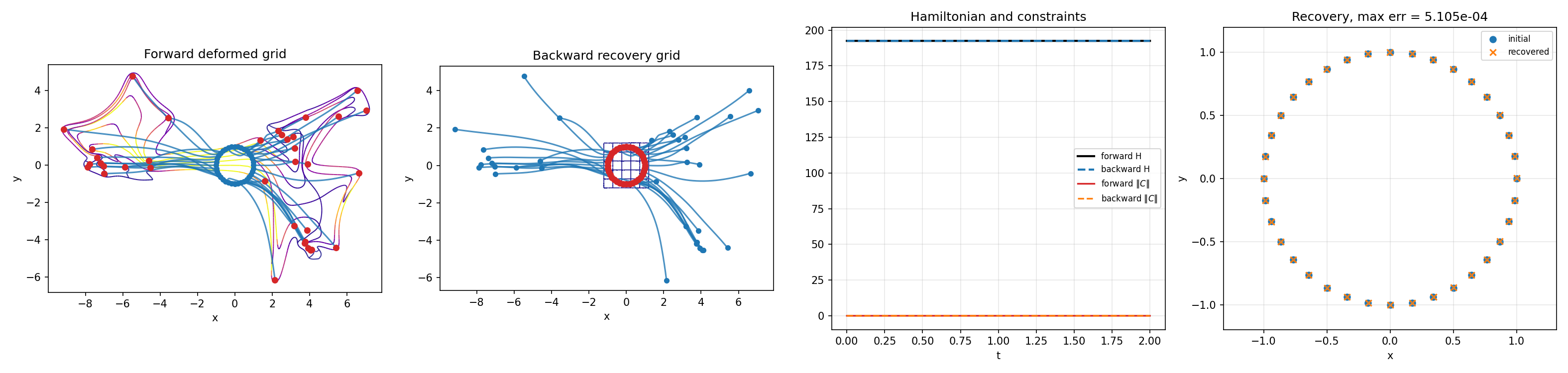}
    \caption{Forwards and backwards integration of the landmark geodesics with the $k_2$ kernel. Left: Forwards integration of the landmark geodesics with the $k_2$ kernel, random initial momentum. The initially rectangular grid is shown deformed by the flow. Center left: Integration with initial conditions are $(q_T,-p_T)$ where $q_T$ is the final configuration and $p_T$ is the momentum from the forwards integration. The backwards flow recovers the circular landmark configuration and the rectangular grid is recovered. Center right: Hamiltonian and constraints are preserved through the integration. Right: Difference between the initial and recovered configurations.}
    \label{fig:landmark_hamiltonian_demo_backwards}
\end{figure}

\subsection{Shooting-based matching}
In the third experiment, we match an initial circular configuration with $n=32$ landmarks to a deformed target configuration. The target is obtained from the circle by anisotropic deformation and translation. We use the rigid-motion invariant loss \eqref{eq:rigid_motion_invariant_loss} to match the initial configuration to the target.

As illustrated in Figure~\ref{fig:landmark_matching_demo}, the shooting procedure is able to match the initial configuration to the target up to rigid motions as intended. Figure~\ref{fig:butterflies} in the introduction illustrates the matching with the $k_2$ kernel of two butterfly shapes with the same setup.

\begin{figure}[t]
    \centering
    \includegraphics[width=\textwidth]{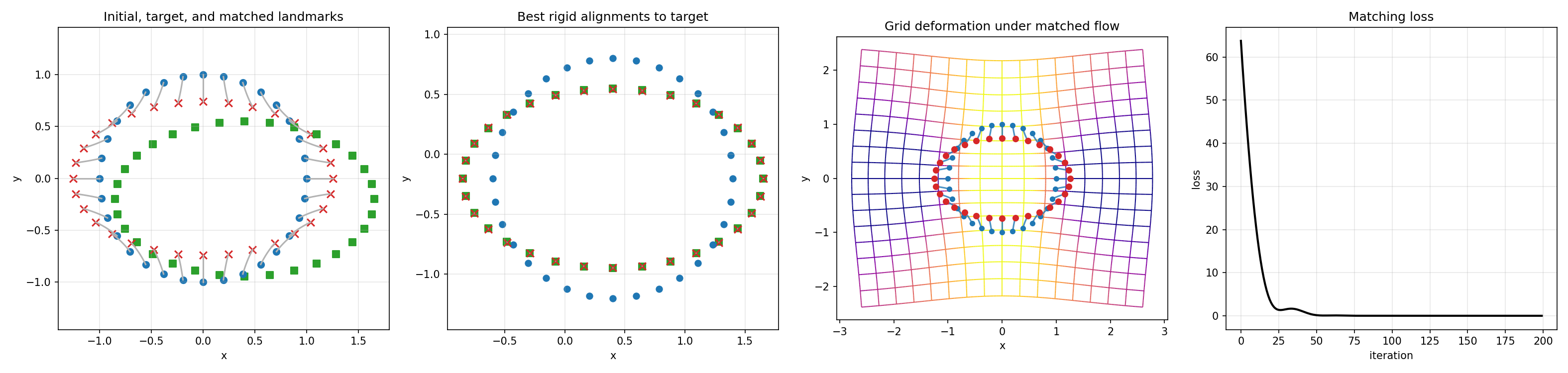}
    \caption{Shooting-based matching produced by optimizing the initial momentum of the constrained Hamiltonian equations. Left: initial landmarks, target landmarks, and the optimized trajectory. Center left: Initial and final configuration, both after best rigid alignment to the target. Center right: Grid deformation induced by the matching. Right: The rigid-motion invariant loss during optimization.}
    \label{fig:landmark_matching_demo}
\end{figure}

\subsection{Local rigid motions}
To illustrate the local rigid motion preservation of the $k_2$ kernel, we perform a similarly matching experiment as above but with two circular configurations matched to translated version in opposite horizontal directions, i.e. the difference between the configurations is a local translation in the horizontal direction. In Figure~\ref{fig:matching-grid-kernel-comparison} in the introduction, we perform matching with both the $k_2$, Gaussian and Mat\'ern-$3/2$ kernels in a case of two circles that are each translated in opposite horizontal directions. The $k_2$ kernel preserves the local rigid motion inside the circles while the other kernels perform interpolation that tend to inhibit local rigid motion because of the identity component in the operator. The movement is thus centered only around the landmarks themselves. This effect is visible in the deformation of the underlying grid.

In Figure~\ref{fig:landmark_matching_local_rigid}, we perform the same matching experiment for two rotated circles with the $k_2$ kernel. Here we again see the local rigid motion inside the circles preserved.

\begin{figure}[t]
    \centering
    \includegraphics[width=\textwidth]{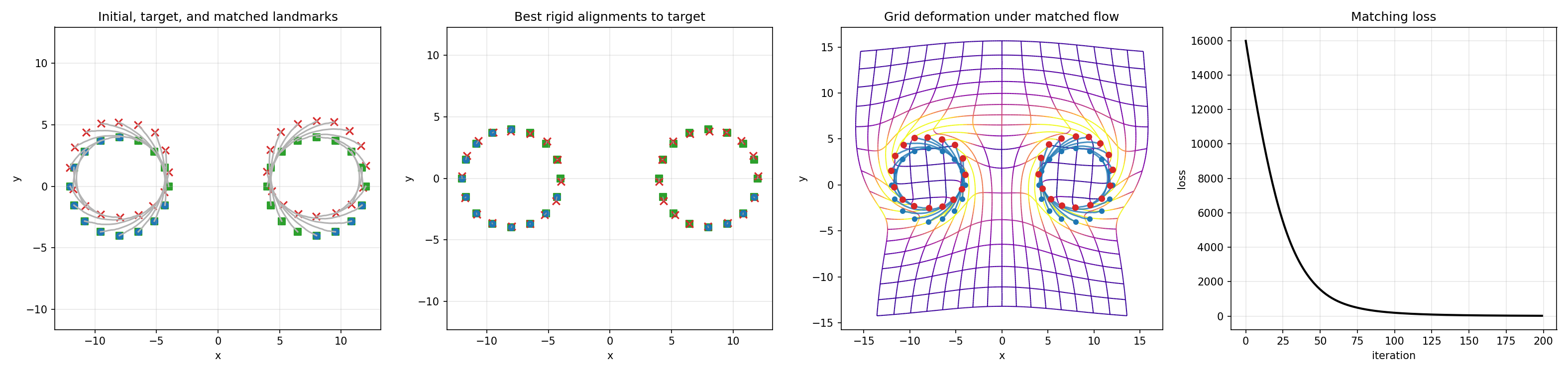}
    \caption{Landmark matching for two circles rotated 90 degrees in opposite directions with the $k_2$ kernel. The local rigid motion inside the circles is preserved.}
    \label{fig:landmark_matching_local_rigid}
\end{figure}

\subsection{Scale normalization}
In Figure~\ref{fig:landmark_matching_scale_normalized}, we perform matching with the $k_2$ kernel with and without scale normalization. Both shapes in the matching satisfy the $R(q)=1$ constraint. As shown in the figure, the scale is preserved during the flow in the scale-normalized case, while not in the non-scale-normalized case.
\begin{figure}[t]
    \centering
    \includegraphics[width=\textwidth]{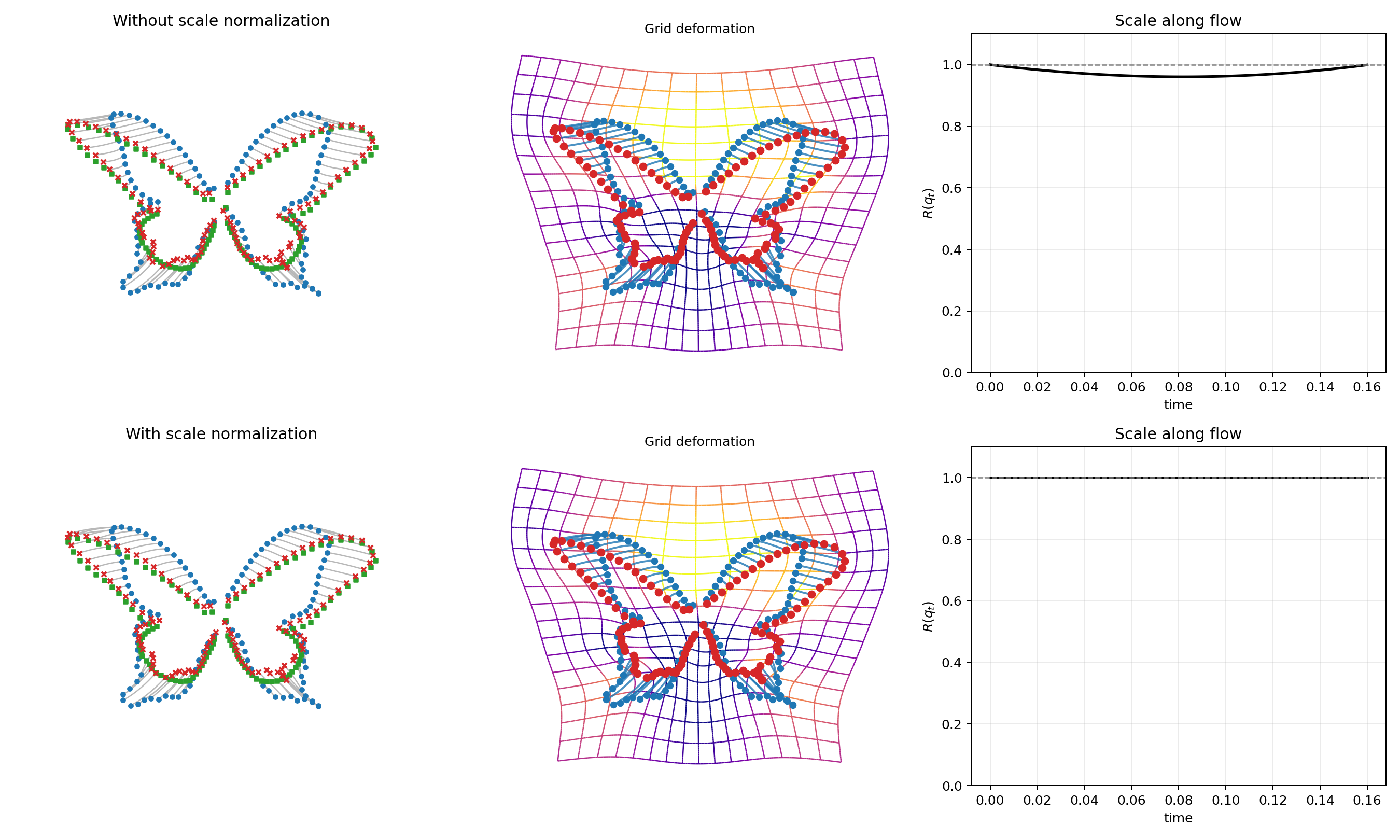}
    \caption{Landmark matching for the butterflies in Figure~\ref{fig:butterflies} with and without scale normalization. The scale is preserved in the scale-normalized case, while not in the non-scale-normalized case.}
    \label{fig:landmark_matching_scale_normalized}
\end{figure}

\section*{Acknowledgements}
We thank Nicolas Charon for suggesting the rigid-motion invariant loss \eqref{eq:rigid_motion_invariant_loss} used in the shape matching experiments. 

The authors used large language models to help deriving and checking some of the mathematical results in the paper, as well as for writing the code for the numerical experiments.

The work presented in this paper was supported by the Villum Foundation Grant 40582, the Novo Nordisk Foundation grants NNF18OC0052000, NNF24OC0093490 and NNF24OC0089608.

\appendix
\section{Appendix}

\subsection{Fourier transform conventions}
\label{sec:fourier_conventions}
We write \(\widehat{\cdot}=\mathcal F[\cdot]\) for the Fourier transform on $\mathbb R^d$ with the convention
\(
\widehat{f}(\xi)=\int_{\mathbb R^d}e^{-ix\cdot\xi}f(x)\,dx\) and \(\mathcal F^{-1}[g](x)=(2\pi)^{-d}\int_{\mathbb R^d}e^{ix\cdot\xi}g(\xi)\,d\xi.
\)
In Fourier variables, we denote the longitudinal/transverse projectors
\[
\Pi^\|(\xi)=\frac{\xi\xi^{\!\top}}{|\xi|^2},\qquad \Pi^\perp(\xi)=\Id-\Pi^\|(\xi).
\]
$\Pi^\|(\xi)$ and $\Pi^\perp(\xi)$ can be seen as the Fourier space equivalents of the Helmholtz projectors $\Prirrot(x)$ and $\Prsol(x)$ in \eqref{eq:k_split}. 

\subsection{Operators and spaces of vector fields}
For $v\in L^2(\R^d,\mathbb R^d)$, let $\hat v$ denote the Fourier transform of $v$. LDDMM most often has its Lie algebra being a Sobolev space
\begin{equation}
    H^s(\R^d,\R^d) 
    = 
    \left\{V\in L^2(\R^d,\R^d)\,|\,\|v\|_s^2:=\int(1+\sigma^2|\xi|^2)^s|\hat{v}(\xi)|^2d\xi<\infty\right\}
    . 
\end{equation}
Equivalently, for integer $s$, we can start with the differential operator $L_{H^s}=(\Id-\sigma^2\Delta)^s$, define the norm $\|v\|_V^2=\left<Lv,v\right>_{L^2}$ and retrieve $H^s$ as the  subspace of $L^2(\R^d,\R^d)$ where this norm is finite. Here the Laplacian is $\De=\sum_{i=1}^d\partial_{x_i}^2$, and the exponent $s$ in $L_{H^s}$ means $s$ times application of the operator $\Id-\sigma^2\Delta$. For vector fields, the application is componentwise. 
In geometric mechanics and LDDMM, the Sobolev operators are often called inertia operators or momentum operators. The inverse of the inertia operator is denoted the kernel and is denoted by $K$.

The Laplacian operator $L_{\Delta^s}=(-1)^s\De^s$ uses $s$ iterated applications of the Laplacian $\De^s=\De\De^{s-1}$. It is another example of a differential operator, lacking the identity component of $L_{H^s}$; in this sense, $L_{H^s}$ may be regarded as a screened version of $L_{\Delta^s}$, analogously to the screened Poisson equation. The Banach space associated to $L_{\Delta^s}$ is the Beppo-Levi space  $\dot{W}^{s,p}$ of functions with all derivatives of order $s$ in $L^p(\R^d,\R^d)$, i.e., 
\begin{equation*}
\dot{W}^{s,p} = \{v\in L_{loc}^1(\R^d)\,|\,\|\partial^\al v\|_{L^p(\R^d)}<\infty \text{ for all }\al \text{ with }|\al|= s\}.
\end{equation*}
The norm $\|v\|_{\dot{W}^{s,p}}=\sum_{|\al|=k}\frac{s!}{\al!}\|\partial^\al v\|_{L^p(\R^d)}$ vanishes on the nontrivial set of all $v$ with $\|v\|_{\dot{W}^{s,p}}=0$. 
For $p=2$, we also write $\dot W^{s,2} = \dot H^s$ and we have \begin{equation*}
\|v\|^2_{\dot{W}^{s,2}}=\left<(-1)^s\De^s v,v\right>_{L^2(\R^d)}=\sum_{|\al|=k}\frac{s!}{\al!}\|\partial^\al v\|^2_{L^2(\R^d)}
\end{equation*}
Because of the lack of the identity term, $L_{\Delta^s}$ has a non-trivial null-space, including at least $0$- and $1$st order polynomials, i.e affine transformations of $\R^d$. The operator therefore cannot be inverted directly leading to the notion of conditionally positive kernels described below. The use of Beppo-Levi spaces for shape analysis goes back to Bookstein \cite{booksteinMorphometricToolsLandmark1997}. \cite{boufadeneFastLargeDeformation2025} uses particular Beppo-Levi spaces for landmark matching with the energy distance kernel.

\subsection{Translation- and rotation-invariant operators and kernels}
\label{sec:kernels}
All operators described in the paper are translation- and rotation-invariant, and we therefore describe the general form of the kernels and Fourier transforms of such operators. Note that invariance here concerns rigid motions acting on the domain of the operator, and it is a different concept compared to infinitesimal rigid motions being in the null-space of the operators.

Let $L$ be an operator on vector fields $v\in\mathfrak X(\R^d)$, $\|v\|_L^2=\left<Lv,v\right>_{L^2(\R^d)}$ its corresponding Hilbert space norm, and let $K$ be its kernel on integral form $K:\R^d\to\R^{d\times d}$ so that $LK(\cdot,x)=\delta_x$. Then, if translations $x\mapsto x+b$ act as isometries for the norm, the kernel can be shown to be on the form $K(x,y)=k(x-y)$ for a function $k:\R^d\to\R^{d\times d}$, see e.g. \cite{micheliMatrixvaluedKernelsShape2014}. If furthermore rotations $x\mapsto Rx$ act as $k(Rx)=Rk(x)R^{-1}$, then $k$ can be written as
\begin{equation}
    k(x) = k^\parallel(r)\Prirrot_x + k^\perp(r)\Prsol_x
    \label{eq:k_split}
\end{equation}
for $x\not=0$ where $\Prirrot_x=\frac{xx^T}{|x|^2}$, $\Prsol_x=\Id_d - \frac{xx^T}{|x|^2}$ where $r=|x|$ is the $\R^d$-norm for $x$. $\Prirrot_x$ and $\Prsol_x$ are called the Helmholtz projectors.

When furthermore $k(x)=\tilde{k}(r)\Id_d$, for a scalar function $\tilde{k}$, $k$ is called a scalar kernel and $\tilde{k}$ are often identified with $k$. For the Sobolev operators, the kernels are scalar and given by
\begin{equation}
    k_{H^s}(r)=\frac{\alpha}{2^{s-1}(2\pi)^{\frac{d}{2}}\Gamma(s)\sigma^d}\left(\frac{r}{\sigma}\right)^\nu K_\nu\left(\frac{r}{\sigma}\right)
    \label{eq:k_H^s}
\end{equation}
where $\nu=s-d/2$ and $K_\nu$ denotes the modified Bessel function of order $\nu$; see, e.g., \cite{adamsFunctionSpacesPotential1999,rasmussenGaussianProcessesMachine2008,micheliMatrixvaluedKernelsShape2014}. These kernels are denoted Mat\'ern or Bessel kernels. For half-integer $\nu$, they have explicit expressions that are often used in computations.

The kernel for the Laplacian $L_{\Delta^s}$ of any order also has explicit expressions, see e.g. \cite{abatangeloHigherorderFractionalLaplacians2021} or \cite[(10.11)]{wendlandScatteredDataApproximation2004}. For $s>d/2$ and $x\in \mathbb R^d$ with $r=|x|$, the fundamental solutions are
$$
k_{\Delta^s}(r) = 
\begin{cases} 
\qquad\frac{\Ga(d/2-s)}{2^{2s}\pi^{d/2}\Ga(s)}r^{2s-d} & \quad \text{ for } s-\frac{d}{2} \notin \mathbb N  , \\
\frac{(-1)^{s+(d-2)/2}}{2^{2s-1}\pi^{d/2}\Ga(s)(s-d/2)!}r^{2s-d}\log(r) &
\quad\text{ for } s-\frac{d}{2}  \in \mathbb N .
\end{cases}
$$
Notice that $k_{\Delta^s}$ are not in $L_2(\R^d,\R^d)$.

\subsection{Conditionally positive definite reproducing kernels}
\label{sec:cpd_kernels}
For operators such as the Sobolev operators $L_{H^s}$, the null-space is trivial, the corresponding kernel is positive definite, and the Hilbert space has the structure of a reproducing kernel Hilbert space (RKHS). Since the landmark Riemannian metric will depend on the kernel, to treat operators with non-trivial null-space such as $L_{\Delta^s}$, we need to consider conditionally positive definite kernels.

While the quadratic form $\left<v,w\right>=\left<L_{\Delta^s} v,w\right>_{L^2(\R^d)}$ defines a non-degenerate quadratic form on $\X_\mathcal A(\R^d)$, the completion under the norm will contain elements that cannot be interpreted as continuous functions as is usually the case in an RKHS. However, the notion of conditionally positive definite kernels allows to give a RKHS-like structure in the so-called native space of the kernel, thus giving a way to recover the Beppo-Levi spaces $\dot H^s$ as native spaces in a similar way as can be done for Sobolev spaces. The description here follows \cite{wendlandScatteredDataApproximation2004} in the generality needed for the paper.
\begin{definition}
Let $\mathcal P$ be a subspace of polynomials on $\R^d$, and $k:\R^d\to \R$ a symmetric function such that for any configuration of distinct points $x_1,\ldots,x_N\in \R^d$, the quadratic form
\begin{equation*}
    a,b\mapsto
    \sum_{i,j=1}^N
    k(x_i-x_j)a_i^Tb_j
\end{equation*}
is positive definite when $a,b$ satisfy
\begin{equation}
    \sum_{i=1}^Na_ip(x_i)=0,\ 
    \sum_{i=1}^Nb_ip(x_i)=0
    \label{eq:cpd_constraints}
\end{equation}
for all $p\in \mathcal P$. Then $k$ is said to be conditionally positive definite with respect to $\mathcal P$. If $\mathcal P$ is the subspace of polynomials of order $\le s-1$, then $k$ is said to be conditionally positive definite (CPD) of order $s$.
\end{definition}
The kernels $k_{\Delta^s}$ described above are conditionally positive definite of order $m=s-\lceil d/2\rceil+1$.

The difference from regular positive definite kernels lies in the restriction \eqref{eq:cpd_constraints} to the constraint space. Let $\tilde{\mathcal F}$ denote the space of finite constraint-satisfying linear combinations of the kernel 
\begin{equation*}
\tilde{\mathcal F}=\left\{\sum_{i=1}^R a_i k(\cdot-x_i)
\,\Big|\,
x_i\in\R^d
\,,\,
\sum_{i=1}^R a_i p(x_i)=0\,\forall p\in \mathcal P
\right\}\ .
\end{equation*}
$\tilde{\mathcal F}$ is a pre-Hilbert space with the kernel norm $\|f\|^2=\sum_{i,j}\langle a_i, k(x_i-x_j)a_j \rangle$, and it can be completed to a Hilbert space $\mathcal F$. However, contrary to the positive definite case, generally $k(\cdot-x)\not\in \mathcal F$, i.e. the kernel is not in the RKHS completion. This prevents the definition of evaluation functionals in the completion using the usual RKHS construction, which would look like 
\begin{equation*}
\delta_x f
=
\left<f,k(\cdot-x)\right>_{\mathcal{F}}
\ .
\end{equation*}
Instead,  a projection $\Pi:C(\R^d)\to\mathcal P$ from continuous functions to polynomials can be defined through the choice of a fixed set $x_1,\ldots,x_r$ of reference points that allow to distinguish polynomials. The projection then arises from a Lagrange basis of polynomials at these points. This allows to define functions
\begin{equation*}
    G(\cdot-x)=k(\cdot-x)-\Pi(k(\cdot-x))
\end{equation*}
that are automatically in $\mathcal F$ by construction. This gives a continuous evaluation-like functional
\begin{equation*}
    \delta_{(x)}f=\left<f,G(\cdot-x)\right>_{\mathcal{F}}
\end{equation*}
that can be used to map $\mathcal F$ to $C(\R^d)$ and interpret the Hilbert completion of the kernels as a space of functions. When defining $G$, the projection to polynomials was subtracted. Adding polynomials back as a direct product now gives the native space
\begin{equation*}
    \mathcal N(\R^d)
    =\mathcal F+\mathcal P
\end{equation*}
where elements of $\mathcal F$ are interpreted as functions.  The Beppo-Levi spaces are exactly native spaces $\mathcal N(\R^d)$ for the conditionally positive kernels $k_{\Delta^s}$. 

There is a semi-inner product on $\mathcal N(\R^d)$
\begin{equation*}
    \left<f,h\right>_{\mathcal N(\R^d)}
    =
    \left<f-\Pi(f),h-\Pi(h)\right>
    \ .
\end{equation*}
The splitting between $\mathcal F$ and $\mathcal P$ in the definition of the native space is unique, and the construction including the inner product is independent of the choice of evaluation points in the construction of the projection $\Pi$. The semi-inner product can be extended to an inner product making $\mathcal N(\R^d)$ an RKHS by adding a metric on $\mathcal P$.

\subsection{The screened elasticity kernels}\label{app:A2}
Following \cite{rogulaBasicSolutionsStrain1973}, for $d\in\{2,3\}$, we consider an operator of the form
\begin{equation*}
L=-a(\Delta)\,\Delta-(b(\Delta)-a(\Delta))\,\nabla\operatorname{div},
\label{eq:pole_operator_general}
\end{equation*}
where $a$ and $b$ are scalar polynomials satisfying $a(0)=\mu$ and $b(0)=\lambda+2\mu$. Let $k$ denote the Green's kernel of the operator. In Fourier variables,
\begin{equation*}
\widehat{k}(\xi)
=\frac{\Pi^\perp(\xi)}{|\xi|^2 a(-|\xi|^2)}+\frac{\Pi^\|(\xi)}{|\xi|^2 b(-|\xi|^2)}
=
\frac{\delta_{ij}}{|\xi|^2 a(-|\xi|^2)}
-\frac{\xi_i\xi_j}{|\xi|^4 a(-|\xi|^2)}
+\frac{\xi_i\xi_j}{|\xi|^4 b(-|\xi|^2)}.
\label{eq:pole_symbol_general}
\end{equation*}
We assume first that the roots of $a$ and $b$ are simple and real. Then
\begin{equation*}
a(-\kappa^2)=\mu\prod_{m=1}^p \Bigl(1+\frac{\kappa^2}{\beta_m^2}\Bigr),
\qquad
b(-\kappa^2)=(\lambda+2\mu)\prod_{n=1}^q \Bigl(1+\frac{\kappa^2}{{\beta'_n}^2}\Bigr),
\label{eq:ab_factorization}
\end{equation*}
with $\beta_m,\beta'_n>0$, and we have the partial fraction expansions
\begin{equation*}
\frac{1}{a(-\kappa^2)}=\frac1\mu\sum_{m=1}^p \frac{\alpha_m\beta_m^2}{\kappa^2+\beta_m^2},
\qquad
\frac{1}{b(-\kappa^2)}=\frac1{\lambda+2\mu}\sum_{n=1}^q \frac{\alpha'_n{\beta'_n}^2}{\kappa^2+{\beta'_n}^2}
\end{equation*}
with
$\alpha_m=\prod_{t\neq m}\frac{\beta_t^2}{\beta_t^2-\beta_m^2}$,
$\alpha'_n=\prod_{t\neq n}\frac{{\beta'_t}^2}{{\beta'_t}^2-{\beta'_n}^2}$.
Here $\sum_{m=1}^p \alpha_m=\sum_{n=1}^q \alpha'_n=1$, and 
\begin{align*}
&\frac1{\kappa^2 a(-\kappa^2)}
=
\frac1\mu\sum_{m=1}^p \alpha_m\left(\frac1{\kappa^2}-\frac1{\kappa^2+\beta_m^2}\right),
\\
&\frac1{\kappa^2 b(-\kappa^2)}
=
\frac1{\lambda+2\mu}\sum_{n=1}^q \alpha'_n\left(\frac1{\kappa^2}-\frac1{\kappa^2+{\beta'_n}^2}\right).
\end{align*}
Define
\[
A=\mathcal F^{-1}\left(\frac{1}{|\xi|^2 a(-|\xi|^2)}\right),
\quad
B=\mathcal F^{-1}\left(\frac{1}{|\xi|^2 b(-|\xi|^2)}\right),
\]
and $-\Delta U_a=A$, $-\Delta U_b=B$, respectively. Then
\[
k_{ij}=\delta_{ij}A+\partial_i\partial_j U_a-\partial_i\partial_j U_b.
\]
Finding $k$ reduces to finding the four scalar functions
$k^1=\mathcal F^{-1}\left(\frac1{|\xi|^2}\right)$,
$k^2_\beta=\mathcal F^{-1}\left(\frac1{|\xi|^2+\beta^2}\right)$,
$k^3=\mathcal F^{-1}\left(\frac1{|\xi|^4}\right)$, and 
$k^4_\beta=\mathcal F^{-1}\left(\frac1{|\xi|^2(|\xi|^2+\beta^2)}\right)$,/a
because
\[
A(x)=\frac1\mu\sum_{m=1}^p \alpha_m\left(k^1(x)-k^2_{\beta_m}(x)\right),
\qquad
B(x)=\frac1{\lambda+2\mu}\sum_{n=1}^q \alpha'_n\left(k^1(x)-k^2_{\beta'_n}(x)\right)
\]
and $\mathcal F(U)(\xi)=|\xi|^{-2} \mathcal F(-\Delta U)(\xi)$.
These functions are the Sobolev and Beppo-Levi kernels as described in section~\ref{sec:background}. In $2d$, we have
$k^1(r)=-\frac{1}{2\pi}\log r$,
$k^2_\beta(r)=\frac{1}{2\pi}K_0(\beta r)$,
$k^3(r)=-\frac{1}{8\pi}\left(r^2\log r-r^2\right)$,
$k^4_\beta(r)=-\frac{1}{2\pi\beta^2}\left(\log{r}-K_0(\beta r)\right)$,
and, in $3d$, 
$k^1(r)=\frac{1}{4\pi r}$, 
$k^2_\beta(r)=\frac{e^{-\beta r}}{4\pi r}$,
$k^3(r)=-\frac{r}{8\pi}$,
$k^4_\beta(r)=\frac{1}{4\pi\beta^2}\left(\frac1r-\frac{e^{-\beta r}}{r}\right)$.
This gives, in $2d$,
\begin{align*}
k_{ij}(x)
&=-\frac{1}{2\pi\mu}\,\delta_{ij}
\sum_{m=1}^p \alpha_m\bigl(\log r+K_0(\beta_m r)\bigr)
\nonumber\\
&\quad+\frac{1}{2\pi\mu}\,\partial_i\partial_j
\left(
\frac{r^2}{4}\log r-\frac{r^2}{4}
+
\sum_{m=1}^p\frac{\alpha_m}{\beta_m^2}\bigl(\log r+K_0(\beta_m r)\bigr)
\right)
\nonumber\\
&\quad-\frac{1}{2\pi(\lambda+2\mu)}\,\partial_i\partial_j
\left(
\frac{r^2}{4}\log r-\frac{r^2}{4}
+
\sum_{n=1}^q\frac{\alpha'_n}{{\beta'_n}^2}\bigl(\log r+K_0(\beta'_n r)\bigr)
\right).
\end{align*}
and, in $3d$, 
\begin{align*}
k_{ij}(x)
&=\frac{1}{4\pi\mu}\,\delta_{ij}
\left(
\frac1r-\sum_{m=1}^p \alpha_m\frac{e^{-\beta_m r}}{r}
\right)
\nonumber\\
&\quad-\frac{1}{4\pi\mu}\,\partial_i\partial_j
\left(
\frac r2+
\sum_{m=1}^p\frac{1}{\beta_m^2}\left(\frac1r-\alpha_m\frac{e^{-\beta_m r}}{r}\right)
\right)
\nonumber\\
&\quad+\frac{1}{4\pi(\lambda+2\mu)}\,\partial_i\partial_j
\left(
\frac r2+
\sum_{n=1}^q\frac{1}{{\beta'_n}^2}\left(\frac1r-\alpha'_n\frac{e^{-\beta'_n r}}{r}\right)
\right)
\end{align*}
and hence the result of \cite{rogulaBasicSolutionsStrain1973}. For the screened elasticity family,
\[
a(-\kappa^2)=\mu(1+\sigma^2\kappa^2)^{s-1},
\qquad
b(-\kappa^2)=(\lambda+2\mu)(1+\sigma^2\kappa^2)^{s-1}.
\]
Thus $a$ and $b$ have degree $s-1$, and the corresponding operator has order $2+2(s-1)=2s$. For $s=1$ there is no screening root and $k_1$ is the Kelvin matrix. For $s=2$ there is one simple root $\beta_1=\beta'_1=\sigma^{-1}$, which gives \eqref{eq:k2_2d} and \eqref{eq:k2_3d}. For $s>2$, this root has multiplicity $s-1$, and the same argument applies using the partial fraction expansion for a repeated root

\subsection{Groups of diffeomorphisms}
\label{sec:groups}
Some regular Lie groups of diffeomorphisms on $\mathbb R^d$ are
\begin{itemize}
\item $\Diff_{\mathcal B}(\mathbb R^d)$, the group of all diffeomorphisms which differ from the identity by a function which is bounded together with all derivatives separately; i.e.,
$$\Diff_{\mathcal B}(\mathbb R^d)=\{ \on{Id}_{\mathbb R^d} + f: f\in\mathcal{B}(\mathbb R^d,\mathbb R^d), \det(df) > -1\}\,.$$
Each $\ph\in\Diff_{\mathcal B}(\mathbb R^d)$ is a Lipschitz-map with Lipschitz inverse. 
$\Diff_{\mathcal B}(\mathbb R^d)$ contains $\SE(d)$ and the scaling group $\mathbb R_{>0}.\on{Id}_{\mathbb R^d}$. 
\item $\Diff_{H^\infty}(\mathbb R^d)$, the group of all diffeomorphisms which differ from the identity by a function in the intersection $H^\infty$ of all Sobolev spaces $H^s$ for $k\in \mathbb N_{\ge 0}$.
\item $\Diff_{\mathscr S}(\mathbb R^d)$, the group of all diffeomorphisms which fall rapidly to the 
       identity.
\item $\Diff_c(\mathbb R^d)$, the group of all diffeomorphisms which differ from the identity only on a compact set. 
\end{itemize} 

The four sets of diffeomorphisms $\Diff_c(\mathbb R^d)$, 
$\Diff_{\mathscr S}(\mathbb R^d)$,
$\Diff_{H^\infty}(\mathbb R^d)$, and 
$\Diff_{\mathcal B}(\mathbb R^d)$
are all smooth regular Lie groups with smooth injective group homomorphisms
$$
\Diff_c(\mathbb R^d) \to  \Diff_{\mathscr S}(\mathbb R^d) \to \Diff_{H^\infty}(\mathbb R^d) \to  \Diff_{\mathcal B}(\mathbb R^d) ,
$$
see \cite[3.3]{michorZooDiffeomorphismGroups2013}. Each group is a normal subgroup in any other in which it is contained, in particular in $\on{Diff}_{\mathcal B}(\mathbb R^d)$.  Moreover, $\Diff_c(\mathbb R^d)$ is dense in $\Diff_{\mathscr S}(\mathbb R^d)$ and in $\Diff_{H^\infty}(\mathbb R^d)$, but neither is dense in the rightmost group $\Diff_{\mathcal B}(\mathbb R^d)$.
In \cite{michorZooDiffeomorphismGroups2013} one can find many more regular Lie subgroups of $\Diff_{\mathcal B}(\mathbb R^d)$. 

The constructions in this paper work for any of the subgroups since they all consist of vector fields decaying fast enough near infinity, so that integrals like $\int_{\mathbb R^n}\langle LX,Y\rangle dx$ are finite for the operators $L$ considered here. But they do not work for $\Diff_{\mathcal{B}}(\mathbb R^d)$.  We shall write $\Diff_{\mathcal{A}}(\mathbb R^d)$ where $\mathcal{A}\in \{ H^\infty,\mathcal S, c\}$.

\subsection{Induced metrics on shape spaces}
\label{sec:induced}
The diffeomorphism groups $\Diff_{\mathcal{A}}(\R^d)$ act on a range of shape spaces $\mathcal S$. For landmark configurations $q=(q_1,\dots,q_n)$, curves $q:\SS^1\to \R^d$, or surfaces $q:\SS^2\to \R^d$, the action is by composition $\ph.q=\ph\circ q$. When the group acts transitively, we can regard $\mathcal S$ as an orbit space. If $\Diff_{\mathcal{A}}(\R^d)$ is furthermore equipped with a right-invariant metric, we can fix a shape $s_0\in\mathcal S$ and equip $\mathcal S$ with the induced Riemannian metric that makes the action on $s_0$ a Riemannian submersion. We here describe this construction and the resulting metric specifically for the case of landmark configurations following \cite[Section 9.6]{michorManifoldsMappingsContinuum2020} and \cite{micheliSectionalCurvatureTerms2012}.

A landmark configuration $q\in\on{Land}^n$ can be considered a map $\{1,\dots,n\}\to \mathbb R^d$ and the composition action of diffeomorphisms is then $\ph.q=\ph\circ q=(\ph(q_1),\ldots,\ph(q_n))$. We also write this action as the map $\on{ev}:\Diff_{\mathcal A}(\mathbb R^d)\times \on{Land}^n\to \on{Land}^n$, $\on{ev}(\ph,q)=\ph.q$.
Now let $q_0=(q_{0,1},\dots,q_{0,n})$ be a fixed landmark configuration. Fixing $q_0$ in the evaluation map, we have the surjective mapping $\on{ev}_{q_0}(\ph)=\on{ev}(\ph,q_0)$.

We then consider a right invariant weak Riemannian metric on $\Diff_{\mathcal A}(\mathbb R^d)$ which corresponds to an \emph{injective} linear inertia operator $L$  from the Lie algebra of vector fields $\X_{\mathcal{A}}(\mathbb R^d)$ to its locally convex dual space $\X_{\mathcal{A}}(\mathbb R^d)'$, concrete examples of such operators being the Sobolev operators $L_{H^s}$. The metric $g^L$ is then given by
\begin{equation}
    g^L_\ph(v\o\ph,w\o\ph) = \int_{\mathbb R^d} \langle Lv,w\rangle dx\,,
    \label{eq:gL}
\end{equation}
where $\langle\;,\;\rangle$ denotes the $\mathbb R^d$ inner product. The fiber of $\on{ev}_{q_0}$ over a landmark $q=\ph_0(q_0)$ is  
\begin{align*}
\{\ph\in\on{Diff}_{\mathcal A}(\mathbb R^d): \ph(q_0)=q\}
&= \ph_0\o\{\ph\in\on{Diff}_{\mathcal A}(\mathbb R^d): \ph(q_0)=q_0\}
\\&
= \{\ph\in\on{Diff}_{\mathcal A}(\mathbb R^d): \ph(q)=q\}\o\ph_0\,.
\end{align*}
The tangent space with foot point $\ph_0$ to the fiber over $q$ is 
$$
\{v\o \ph_0: v\in \X_{\mathcal A}(\mathbb R^d), v(q_{0,i}) = 0 \text{  for all } i \}.
$$
A tangent vector $w\o\ph_0 \in T_{\ph_0}\on{Diff}_{\mathcal A}(\mathbb R^d)$ is 
$g^L_{\ph_0}$-perpendicular to the fiber over $q$ if and only if
$$
\int_{\mathbb R^d} \langle Lw, v \rangle\,dx =0\quad 
\forall v\in \X_{\mathcal{A}}(\mathbb R^d)\text{  with }v(q)=0.
$$
Set $L w=\sum_{i=1}^n p_i\otimes \de_{q_i}$, a distributional 1-form with support at the landmark $q$, which we may identify as the pullback via $\on{ev}_{q_0}^*$ of a cotangent vector in $T_q^*\on{Land}^n$. View $p_i\otimes \de_{q_i}\in T_{q_i}^*\mathbb R^d$. We can identify $T_q^*\on{Land}^n_0$ with $(\R^d)^n$ using the Euclidean inner product $\langle\;,\;\rangle$. 
Then 
\begin{equation*}
w(x) = L^{-1}\Big(\sum_{i=1}^n p_i\otimes\de_{q_i} \Big) 
= \int_{\mathbb R^d} k(x-y)\sum_{i=1}^n p_i\otimes\de_{q_i}(y)\,dy
= \sum_{i=1}^n k(x-q_i)p_i,
\end{equation*}
where $k$ is the kernel of $L$, see section \ref{sec:kernels}. This works if the kernel $k$ can be continuously extended to $\sum_{i=1}^n p_i\otimes\de_{q_i}$ as is possible in the RKHS setting, i.e., for a positive definite kernel $k$.  

Now let us consider a tangent vector $v=(v_1,\dots,v_n)\in T_q\on{Land}^n$. Its
horizontal lift with foot point $\ph_0$ is $v^{\text{hor}}\o\ph_0$
where the vector field $v^{\text{hor}}$ on $\mathbb R^d$ is given as follows: Let $K(q)_{ij}$ be the kernel block matrix with $d\times d$ block entries $K(q)_{ij}=k(q_i-q_j)$ and let $v_i$ denote the $d$-subvector of $v$ corresponding to the landmark $q_i$. Then 
\begin{align*}
v^{\text{hor}}(x) &= \sum_{i=1}^n k(x-q_i)(K\i(q)v)_i\,,
\\
L(v^{\text{hor}}(x))&= \sum_{i=1}^n \de(x-q_i)(K\i(q)v)_i\,.
\end{align*}
The Riemannian metric on the finite dimensional manifold $\on{Land}^n$ induced by the $g^L$-metric on
$\on{Diff}_{\mathcal S}(\mathbb R^d)$ such that $\on{ev}_{q_0}$ becomes a Riemannian submersion is then given by
\begin{align*}
g^L_q(v,w) &= g^L_{\ph_0}(v^{\text{hor}}, w^{\text{hor}})
= \int_{\mathbb R^d}\langle L(v^{\text{hor}}),w^{\text{hor}} \rangle\,dx
\\&
= \int_{\mathbb R^d}\Big\langle \sum_{i=1}^n\de(x-q_i)(K\i(q)v)_i,
 \sum_{j=1}^n k(x-q_j)(K\i(q)w)_j \Big\rangle\,dx
 \\
&= \sum_{i,j=1}^n \langle v_i, (K(q)^{-1})_{ij} w_j\rangle = v^TK\i(q)w.
\end{align*}
For a conditionally positive kernel as described in \ref{sec:cpd_kernels}, the evaluation $\sum_{i=1}^n p_i\otimes\de_{q_i}$ is not in general continuous on the completion of $\X_{\mathcal{A}}(\mathbb R^d)$ under the kernel norm. The functional is only continuous if $q_i,p_i$ satisfy the constraints \eqref{eq:cpd_constraints}. This concretely manifests itself in the kernel matrix $K(q)$ failing to be positive definite on the full cotangent space, and the above construction of the Riemannian metric therefore breaks down. However, the idea pursued in this paper is that the same description gives a positive definite Riemannian cometric on the constrained subbundle of the cotangent bundle $T^*\on{Land}^n$, and it therefore induces a Riemannian metric on the quotient shape space $\overline{\on{Land}}^n$.

\subsection{Geodesics on the landmark space}
The geodesic equation for the above metric on $\on{Land}^n$ is most conveniently derived on Hamiltonian form. Consider the cotangent bundle  $T^*{\on{Land}^n}=  
\on{Land}^n\x ((\mathbb R^d)^n)^*\ni (q,\al)$. 
For $\alpha,\beta\in\R^d$ we let again $\langle\alpha,\beta\rangle$ be the standard inner product on $\mathbb R^n$. The cometric on $T^*\on{Land}^n$ is then given by  
\begin{align*}
(g^L)^*_q(\al,\be) &= \sum_{i,j=1}^n \langle\al_i, K(q)_{ij}\be_j\rangle
\end{align*}
and we can define the Hamiltonian energy function
\begin{equation*}
H(q,\al)=\tfrac12 (g^L)^*_q(\al,\al) 
.
\end{equation*}
Its Hamiltonian vector field using $\mathbb R^d$-valued derivatives is
\begin{align*}
H_E(q,\al) &=
  \sum_{i,k=1}^n \Big(k(q_k-q_i)\al_i\frac{\p}{\p q_k} 
+ \langle\al_i.Dk(q_i-q_k)\Big(\frac{\p}{\p \al_k}\Big),\al_k\rangle  \Big). 
\end{align*}
which gives the standard Hamiltonian form of the geodesic equation.
Note that the Hamiltonian and Hamiltonian flow is also defined if the metric is not positive definite as we use in the paper. In such cases, the Hamiltonian may be negative, but, in the conditionally positive definite case, the constraints will ensure positive energy which will be preserved by the flow.

\subsection{The action of $\SE(d)$ on $\Diff_{\mathcal{A}}(\mathbb R^d)$}
Note the following facts for the outer action of $\SE(d)$ on $\Diff_{\mathcal{A}}(\mathbb R^d)$ given in Section~\ref{sec:outer_action}:

\begin{itemize}
\item The identity $\on{Id}_{\mathbb R^d}\in \Diff_{\mathcal{A}}(\mathbb R^d)$ is a fixed point for the action, thus the action is not proper. But $\on{Id}_{\mathbb R^d}$ is the only diffeomorphism with non-compact isotropy group. The restriction of the $\SE(d)$-action to the complement of $\on{Id}_{\mathbb{R}^d}$ in  $\Diff_{\mathcal{A}}(\mathbb R^d)$ is proper: We check \cite[6.20.2]{michorTopicsDifferentialGeometry2008} adapted to this infinite dimensional situation. Suppose that $\ph_n\to \ph$ and $A_n^{-1}\o\ph_n\o A_n \to \ps(x)$ in $\Diff_{\mathcal{A}}(\mathbb R^d)$ for sequences or nets, where $A_n(x)= R_nx +b_n$. Since $\on{SO}(d)$ is compact, by passing to a subsequence, we may assume that $R_n\to R$. Thus $\ph(Rx+b_n)\to R\ps(x)+b_n$. Since $\ps\ne \on{Id}$ there exists $x_0$ with $|R\ps(x_0)-R(x_0)| = C>0$. If a subsequence of $(b_n)$ is unbounded then eventually, since $\ph$ falls to $\on{Id}$ towards infinity, 
$$
\ep>|\ph(Rx_0+b_n) - Rx_0 - b_n|\to |R\ps(x_0) +b_n -Rx_0 -b_n| = C,
$$ 
a contradiction. Thus $(b_n)$ is a bounded sequence and admits a converging subsequence.
\item On a generic diffeomorphism $\ph$, the  action is free, but this is not true in general: Consider $\ph(x) := \frac{r(|x|)}{|x|}x$ where $r:\mathbb R_{\ge0}\to \mathbb R_{\ge0}$ is smooth, $r'(t)\ge\ep>0$ everywhere, and $r(t) =t$ for $t>C$. Then $\ph$ commutes with every element of $\SO(d)$. 
\item The orbit space $\Diff_{\mathcal{A}}(\mathbb R^d)/\SE(d)$ of $\SE(d)$-conjugacy classes of $\Diff_{\mathcal{A}}(\mathbb R^d)$ is Hausdorff. It is stratified into orbit type strata which correspond to conjugacy classes of closed subgroups of $\SE(d)$, i.e., conjugacy classes of stabilizer groups of diffeomorphisms. 
\end{itemize}

\subsection{Geometry of orbit spaces with induced metrics}
\label{sec:orbit_spaces}
We can use general results on the induced metric on orbit spaces to get information about the geometry of $\overline{\on{Land}}^n \cong \on{Land}^n_0/\on{SO}(d)$:

\begin{theorem}{\rm\cite[3.1 -- 3.5]{Michor03orbit}}
Let  $(M,g)$  be a complete  
Riemannian manifold with a proper isometric action of a Lie group $G$ . Then the following holds:
\begin{enumerate}
\item Let $c(t)$ be a geodesic in $M$ and for $X\in \mathfrak{g}$ let $\ze_X\in \X(M)$ be the infinitesimal action vector field. Then $t\mapsto g(c'(t),\ze_X(c(t))$ is constant; see {\rm \cite[30.1]{michorTopicsDifferentialGeometry2008}}. Therefore, if a geodesic is normal to an orbit once, it is normal always.   
\item The orbit space $(M/G, d)$ with the natural metric is a 
      complete metric space and a path metric space. 
\item Any minimal geodesic segment of $M/G$ is the projection of a horizontal (i.e\. orthogonal to orbits) geodesic segment of $M$ which is called a `horizontal lift'.
\item For every $\bar p \in M/G$ there exists $r>0$ such that each $\bar q$ with $d(\bar p,\bar q)<r$ can be connected to $\bar p$ by a unique minimal geodesic segment.
\item Any two horizontal lifts in $M$ of a minimal geodesic segment in $M/G$ differ by the action of an isometry in $G$. For any normal geodesic $c$ in $M$ the projection $\pi\o c$ into $M/G$ has the following property: For each $t$ there exists $r>0$ such that $\pi(c(s))$ for$s$ between $t$ and $t\pm r$ both are minimal geodesic segments.
\item  Let $\bar p \bar q$ be a minimal geodesic  
in $M/G$ of length $d$ and let $pq$ be a horizontal lift. 
Then the stabilizer 
$G_x$ of any interior point of $pq$ is contained in the stabilizers 
$G_p, G_q$ of the end points. 
\item 
$M_{\le(H)}$, the set of orbits with orbit type 
smaller then $(H)$, is a convex subset of 
$M/G$, i.e., any minimal geodesic segment in $M$ between two points in $M_{\le(H)}$ lies in $M_{\le(H)}$.
In particular, $(M/G)_{\operatorname{reg}}$ is a convex  
open dense submanifold. 
\end{enumerate}
\end{theorem}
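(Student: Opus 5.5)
The argument uses Riemannian geometry of proper isometric actions (slice theorem, first variation, Hopf--Rinow) and proves the items in the order listed, since later items use earlier ones.

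\emph{Item 1.} For $X\in\mathfrak g$ the field $\ze_X$ is Killing, so $\nabla\ze_X$ is skew. Along a geodesic, $\tfrac{d}{dt}g(c',\ze_X(c))=g(\nabla_{c'}c',\ze_X)+g(c',\nabla_{c'}\ze_X)=0+0$. Hence if $c'(t_0)\perp T(G.c(t_0))$, then $c'$ stays orthogonal to all $\ze_X$, i.e.\ to the orbits.

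\emph{Item 2.} Properness makes the orbits closed and $M/G$ Hausdorff. Set $d(\bar p,\bar q)=\inf\{d(p,gq):g\in G\}$, which is independent of representatives because $G$ acts isometrically. This infimum is attained: minimizing $g_n$ give $g_nq$ in a bounded set, and properness yields a convergent subsequence. So $d$ is a metric. For completeness, a Cauchy sequence $\bar p_n$ can be lifted inductively with $d(p_n,p_{n+1})=d(\bar p_n,\bar p_{n+1})$, producing a Cauchy sequence in the complete manifold $M$. For the path-metric property, choose $p,q$ realizing the orbit distance and a minimal geodesic between them (Hopf--Rinow). Its projection has length at most $d(\bar p,\bar q)$, because $\pi$ is $1$-Lipschitz.

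\emph{Items 3--5.} By first variation, a geodesic realizing $d(G.p,G.q)$ is orthogonal to $G.q$ at its endpoint, and, swapping roles, also to $G.p$. By item 1 it is then horizontal. Given a minimal segment in $M/G$, lift its endpoints as above; the minimal horizontal geodesic between them projects onto a minimal segment with the same endpoints. Uniqueness comes from item 4. For item 4, use the slice theorem at $p$: the normal exponential map of $G.p$ is a diffeomorphism on a tube of radius $r$ (take $r$ below the normal injectivity radius, which is positive by properness). Every orbit in this tube meets the slice $\exp_p(\nu_p(G.p)\cap B_r)$ in a $G_p$-orbit, and the orbit distance is realized by the unique normal geodesic from the slice. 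Thus minimal segments from $\bar p$ of length $<r$ are unique, and their horizontal lifts starting in $G.p$ differ by elements of $G$. This gives the first claim of item 5. The local minimality of $\pi\o c$ for a normal geodesic $c$ follows by applying the same tube argument at $c(t)$.

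\emph{Items 6--7.} I expect item 6 to be the main obstacle, since it is where the rigidity of minimal horizontal geodesics must be exploited. Let $x$ be interior to a minimal horizontal $pq$ and $h\in G_x$. The broken curve $h(px)\cup xq$ joins $hp\in G.p$ to $q$ and has length $d(p,q)=d(G.p,q)$. It therefore minimizes the distance from the orbit $G.p$ to $q$, so it must be an unbroken smooth geodesic. Since it shares the segment $xq$ with $pq$, uniqueness of geodesics forces $h(px)=px$, hence $hp=p$; thus $G_x\subset G_p$, and symmetrically $G_x\subset G_q$. Item 7 follows immediately: interior points of a minimal segment have stabilizers contained in (conjugates of) those of the endpoints, so their orbit type is $\le(H)$ whenever the endpoints' is. The principal stratum is open and dense by the slice theorem, hence a convex open dense submanifold.
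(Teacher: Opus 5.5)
The paper gives no proof of this statement. It imports it from \cite[3.1--3.5]{Michor03orbit}, so your argument has to stand on its own.

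\textbf{What is fine.} Items 1 and 6, and the second half of item 5, are sound. Your broken-curve argument in item 6 is the right rigidity argument. Item 7 is sound once item 3 is available.

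\textbf{The gap in items 3 and 4.} You never show that an \emph{arbitrary} minimal segment $\bar c$ of the metric space $(M/G,d)$ is the projection of a horizontal geodesic, and that is where the content of the theorem lies. What you actually prove is:
(a) \emph{some} minimal segment from $\bar p$ to $\bar q$ is such a projection;
(b) inside the tube around $G.p$, the minimizing normal geodesics from $G.p$ to a given orbit are unique up to $G_p$.

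To get item 3 from (a) you invoke uniqueness from item 4. That uniqueness is only local, and minimal segments in $M/G$ are not unique in general ($G$ trivial, $M=S^2$, antipodal points). So long segments are not covered.

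Item 4 does not follow from (b) either. Claim (b) rules out only competitors that are already projections of horizontal geodesics. A minimal segment of $M/G$ is a priori just a curve, and knowing that it lifts is exactly the local form of item 3. So items 3 and 4 lean on each other.

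\textbf{How to close it.} Apply your item-6 trick to the quotient curve itself.
\begin{itemize}
\item Let $\bar c:[0,L]\to M/G$ be minimal with unit speed and $L<r$, and fix $p$ over $\bar c(0)$. For each $t$, pick $x_t$ over $\bar c(t)$ with $d(p,x_t)=t$, and $q_t$ over $\bar c(L)$ with $d(x_t,q_t)=L-t$; both exist because orbit distances are attained.
\item The concatenation of the minimal geodesics $p\,x_t$ and $x_t\,q_t$ has length $L\le d(p,q_t)$. Hence it is an unbroken geodesic realizing $d(p,G.q_t)$, so it is horizontal: $s\mapsto\exp_p(sv_t)$ with $v_t\in\nu_p(G.p)$ a unit vector.
\item The point $\exp_p(Lv_t)=q_t$ lies in the slice and over $\bar c(L)$. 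Injectivity of the normal exponential map on the tube then gives $v_t\in G_p.v_0$. Therefore $\bar c(t)=\pi(\exp_p(tv_0))$ for every $t$. This yields local lifting and the uniqueness in item 4 together.
\item For a long segment, cover $[0,L]$ by finitely many intervals $(t_i-r_i,t_i+r_i)$, with $r_i$ the tube radius at $\bar c(t_i)$. Refine to a partition containing all $t_i$, and lift each piece, which is a subsegment of a short segment issuing from some $\bar c(t_i)$.
\item Move consecutive lifts by elements of $G$ so that they match. The resulting broken horizontal geodesic has length $d(\bar c(0),\bar c(L))$, which is at most the distance between its endpoints, so it is unbroken.
\item The first claim of item 5 then passes from short to long segments, because geodesics that agree on an interval coincide.
\end{itemize}

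\textbf{A smaller slip in item 2.} Lifts chosen with $d(p_n,p_{n+1})=d(\bar p_n,\bar p_{n+1})$ need not form a Cauchy sequence, because these distances need not be summable. First pass to a subsequence with $d(\bar p_{n_k},\bar p_{n_{k+1}})<2^{-k}$.
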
 

While this theorem applies to the positive definite metrics induced by e.g. Sobolev operators, we cannot use it directly for the constructions with the screened elasticity since it doesn't give a Riemannian metric on $\on{Land}^n$. The corollary below rectifies this by allowing the construction of a Riemannian metric on $\on{Land}^n$ from a semi-definite metric, thus allowing the theorem above to be applied. This extension is analogous to extension of CPD metrics to positive definite metrics mentioned in section~\ref{sec:cpd_kernels}. The induced metric on $\overline{\on{Land}}^n$ is not affected by the extension since the quotient removes the rigid-motion component.

\begin{corollary}
Let  $\ell: G\x M\to M$  be a proper smooth action of a compact Lie group with infinitesimal action  $\ze:\mathfrak{g}\to \X(M)$.  
Let $k:T^*M\x_M T^*M \to \mathbb R$ be a smooth symmetric positive semi-definite form such that for each $q\in M$ the null-space $N_q$ of $k_q$ is a linear complement to the annihilator $\ze(\mathfrak{g})(q)^{\on{ann}}$ of the tangent space to the $G$-orbit through $q$, so that $ T^*_qM= N_q\oplus \ze(\mathfrak g)(q)^{\on{ann}}$.

Let $\ga$ be a bi-invariant Riemannian metric on $G$ and for each $G$-orbit $G.q$ let $\ga^{G.q}$ be the induced $G$-invariant Riemannian metric on this orbit, such that $\on{ev_q}: G\to G.q$ is a Riemannian submersion. The duality $\langle \;,\;\rangle:T^*M\x_M TM \to \mathbb R$ restrict to a complete duality between $T_q(G.q)$ and $N_q$ which identifies $N_q$ with $T^*_q(G.q)$. The inverse of $\ga^{G.q}$ thus is a $G$-invariant cometric  $(\ga^{G.q}_q)^{-1}$ on $N_q$. Finally, $q\mapsto (\ga^{G.q}_q)^{-1}+ k_q$ is a $G$-invariant positive definite cometric on $T^*M$ whose inverse is a $G$-invariant Riemannian metric $\tilde K$ on $M$. 
\end{corollary}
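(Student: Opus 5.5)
The plan is to verify the claims of the corollary in order: the duality statement, the cometric on $N_q$, positive definiteness, smoothness, and $G$-invariance of the sum.

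\emph{Step 1: duality.} We first show that the pairing between $T_q(G.q)=\ze(\mathfrak g)(q)$ and $N_q$ is perfect. The annihilator $A_q:=\ze(\mathfrak g)(q)^{\on{ann}}$ has dimension $\dim M-\dim T_q(G.q)$. Since $T^*_qM=N_q\oplus A_q$, we get $\dim N_q=\dim T_q(G.q)$. If $\nu\in N_q$ pairs to zero with all of $T_q(G.q)$, then $\nu\in A_q\cap N_q=0$. Hence the pairing is nondegenerate, and $N_q\cong T_q(G.q)^*=T^*_q(G.q)$. Transporting the inverse of the orbit metric $\ga^{G.q}_q$ through this identification gives a positive definite form on $N_q$.

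We extend this form to all of $T^*_qM$ by declaring it zero on the complement. Concretely, set $c_q(\al,\be):=(\ga^{G.q}_q)^{-1}(\al|_{T_q(G.q)},\be|_{T_q(G.q)})$. This is defined for all covectors, is positive semidefinite, and has null-space exactly $A_q$. We will use this restriction formula as the definition, since it makes later steps transparent.

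\emph{Step 2: positive definiteness.} The form $k_q$ is positive semi-definite with null-space $N_q$, so it is positive definite on $A_q$. Take $\al=\nu+a$ with $\nu\in N_q$ and $a\in A_q$, and suppose $c_q(\al,\al)+k_q(\al,\al)=0$. Both terms are nonnegative, so both vanish. From $c_q(\al,\al)=0$ we get $\al\in A_q$, so $\nu=0$. Then $k_q(a,a)=0$ forces $a=0$. Hence the sum is positive definite, and its inverse $\tilde K$ is a Riemannian metric provided smoothness holds.

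\emph{Step 3: invariance.} The form $c_q$ is $G$-invariant because $\ga$ is bi-invariant. Indeed, $\on{ev}_{gq}=\on{ev}_q\o r_g$ and $\ell_g\o\on{ev}_q=\on{ev}_q\o \mathrm{conj}_g$, so $\ell_g$ restricted to orbits is an isometry $(G.q,\ga^{G.q})\to(G.q,\ga^{G.q})$, and its cotangent lift preserves $c$. For $k$ we assume, as in the intended application where the kernel is $\SE(d)$-equivariant, that $k$ is $G$-invariant; the corollary needs this hypothesis implicitly. Given that, the sum is $G$-invariant.

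\emph{Main obstacle: smoothness in $q$ across orbit types.} The dimension of $T_q(G.q)$ jumps between strata, so Step 1 does not immediately give a smooth cometric. We avoid this by writing $c$ globally. Let $\ga_e$ be the bi-invariant inner product on $\mathfrak g$, and for $\al\in T^*_qM$ let $\ze_q^*\al\in\mathfrak g^*$ be defined by $X\mapsto\al(\ze_X(q))$. Then
\[c_q(\al,\be)=\ga_e^{-1}(\ze_q^*\al,\ze_q^*\be),\]
because $\on{ev}_q$ being a Riemannian submersion means $T_e\on{ev}_q=\ze_q$ is an isometry from $(\ker\ze_q)^\perp$ onto $T_q(G.q)$. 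This expression is manifestly smooth in $q$ and coincides with the pointwise definition of Step 1. Checking this identity is the key verification. Combined with the smoothness of $k$, it yields a smooth positive definite, $G$-invariant cometric, whose inverse is the required metric $\tilde K$.
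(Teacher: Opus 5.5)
The paper gives no separate proof of this corollary; its justification is the chain of assertions built into the statement itself: the perfect pairing $N_q\cong T^*_q(G.q)$, transport of $(\ga^{G.q}_q)^{-1}$ to $N_q$, and addition of $k_q$. Your proof follows that chain and correctly supplies what the paper leaves implicit:
\begin{itemize}
\item Your restriction formula is exactly the extension of $(\ga^{G.q}_q)^{-1}$ by zero along $\ze(\mathfrak g)(q)^{\on{ann}}$, and it shows that this extension does not depend on the choice of $N_q$.
\item Your argument that both nonnegative terms must vanish gives positive definiteness.
\item The identity $c_q(\al,\be)=\ga_e^{-1}(\ze_q^*\al,\ze_q^*\be)=\sum_a\al(\ze_{X_a}(q))\,\be(\ze_{X_a}(q))$, for a $\ga_e$-orthonormal basis $(X_a)$ of $\mathfrak g$, gives smoothness, which the paper does not address at all.
\item You are right that $G$-invariance of $k$ is a hypothesis the statement omits but needs.
\end{itemize}

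There are two corrections. First, the identity $\ell_g\o\on{ev}_q=\on{ev}_q\o\mathrm{conj}_g$ is false: the left side sends $h$ to $(gh).q$, the right side to $(ghg^{-1}).q$. The correct identities are $\ell_g\o\on{ev}_q=\on{ev}_{g.q}\o\mathrm{conj}_g=\on{ev}_q\o\lambda_g$, with $\lambda_g$ left translation. With these, bi-invariance does give that $\ell_g$ is an isometry of $(G.q,\ga^{G.q})$, as you intend. Invariance is even quicker from your global formula, since $T_q\ell_g\,\ze_X(q)=\ze_{\on{Ad}_gX}(g.q)$ and $\on{Ad}_g$ is $\ga_e$-orthogonal.

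Second, the jump in orbit dimension that you single out as the main obstacle cannot actually occur under the hypotheses. They force $\on{rank}k_q=\dim M-\dim G.q$. Both $q\mapsto\on{rank}k_q$ and $q\mapsto\dim G.q=\on{rank}\ze_q$ are lower semicontinuous, and the first gives $\dim G.q'\le\dim G.q$ for $q'$ near $q$ while the second gives the reverse inequality. Hence $\dim G.q$ is locally constant. This does not hurt your argument, since the global formula is still the cleanest way to see that $c$ is smooth.
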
 
Thus all the results of the theorem above apply now to $M\to M/G$.

\subsection{Non-collision along Hamiltonian trajectories}
\label{app:non-collision}
We here prove Theorem~\ref{thm:non-collision}.
\begin{proof}[Proof of Theorem~\ref{thm:non-collision}]
    Let $r_{ij}(t)=|q_i(t)-q_j(t)|$ denote the Euclidean distance between landmark $i$ and $j$.
    Let $[0,T)$ be a maximal solution interval and suppose for contradiction $T<\infty$. As argued below,  $r_{ij}$ remains bounded both below away from $0$ and above for all $i,j$, and $q(t)$ thus remains in a compact subset of $\on{Land}^n(\R^d)$. 
    We know from Proposition~\ref{prop:momentum_map_preserved} that $\alpha(t)\in\mathfrak{se}(d)(q(t))^{\mathrm{ann}}$. Since $k_s$ is positive definite on $\mathfrak{se}(d)(q(t))^{\mathrm{ann}}$ and the Hamiltonian is constant along the trajectory, $(q(t),\alpha(t))$ stays in a compact subset of $T^*\on{Land}^n(\R^d)$. On this subset, the Hamiltonian vector field is locally Lipschitz and $(q(t),\alpha(t))$ can be extended beyond $T$ by the standard finite dimensional ODE continuation theorem. This gives the contraction on $T<\infty$.

    To show that $r_{ij}$ remains bounded below away from 0 and above, we use that, for a sufficiently small $r_0>0$ and $r_{ij}<r_0$,
    \begin{equation}
        |\dot r_{ij}|
        \leq
        \begin{cases}
        C r_{ij}\sqrt{1+|\log r_{ij}|},
            &s=1+\tfrac d2,\\
        C r_{ij},
            &s>1+\tfrac d2,
        \end{cases}
        \label{eq:small-distance-ode}
    \end{equation}
    and 
    \begin{equation}
        |\dot r_{ij}|
        \leq
        \begin{cases}
        C\sqrt{1+\log(1+r_{ij})},&d=2,\\
        C,&d=3
        \end{cases}
        \label{eq:large-distance-ode}
    \end{equation}
    for some $C>0$.
    These bounds imply that the Osgood integrals 
    $\int_0^{r_0} \frac{dr}{r\sqrt{1+|\log r|}}$, 
    $\int_0^{r_0}\frac{dr}r$, 
    and $\int_1^\infty\frac{dr}{\sqrt{1+\log(1+r)}}$ all diverge, and hence $r_{ij}$ can neither reach zero or infinity in finite time.

    To see that \eqref{eq:small-distance-ode}, \eqref{eq:large-distance-ode} hold, consider the covector $\beta^{ij}\in(\R^d)^n$ with all entries zero except $\beta^{ij}_i=e_{ij}$ and $\beta^{ij}_j=e_{ji}$, $e_{ij}=(q_i-q_j)/r_{ij}$. Note that $\beta^{ij}\in\mathfrak{se}(d)(q))^{\mathrm{ann}}$, and, along the Hamiltonian trajectory, $\dot r_{ij}=K_q(\beta^{ij},\alpha)$. Hence, by Cauchy-Schwarz,
    \begin{equation*}
        |\dot r_{ij}|
        \leq
        \sqrt{K_q(\beta^{ij},\beta^{ij})}
        \sqrt{K_q(\alpha,\alpha)}.
    \end{equation*}
    The rightmost term is constant because the Hamiltonian is constant, and
    \begin{equation*}
        K_q(\beta^{ij},\beta^{ij})
        =
        2e_{ij}^{\mathsf T}
        \bigl(k_s(0)-k_s(r_{ij}e_{ij})\bigr)e_{ij}.
    \end{equation*}
    The bound \eqref{eq:small-distance-ode} then results from the estimate
    \begin{equation*}
        e^{\mathsf T}\bigl(k_s(0)-k_s(re)\bigr)e
        \leq
        \begin{cases}
        Cr^2\bigl(1+|\log r|\bigr),
            &s=1+\tfrac d2,\\
        Cr^2,
            &s>1+\tfrac d2
        \end{cases}
    \end{equation*}
    that can be derived from the Fourier symbol of $k_s$.
    Similarly, \eqref{eq:large-distance-ode} folllows from
    \begin{equation*}
        \|k_s(x)\|
        \leq
        \begin{cases}
        C\bigl(1+\log(1+|x|)\bigr),&d=2,\\
        C,&d=3.
        \end{cases}
    \end{equation*}
    which can be seen from the expression for the Kelvin matrix $\Phi$.

\end{proof}

\subsection{Scale normalization}
\label{app:scale}
Let $N(q)$ be the constrained $K(q)_{ij}^{-1}$-gradient of the radius function $R$ with defining relation for a constrained vector $v$. As in \ref{sec:induced}, let $K(q)_{ij}$ be the kernel block matrix with $d\times d$ block entries. 
We have
\begin{align*}
&dR(q)v = \sum_i\langle C(q)-q_i,C(v)-v_i\rangle  
= \sum_j\big\langle N_j(q), \sum_i K(q)^{-1}_{ji}v_i\big\rangle 
,
\\
&w_j = \sum_i K(q)^{-1}_{ji} v_i \quad \iff \quad
v_i = \sum_j K(q)_{ij} w_j
,
\\
&\sum_j\langle N_j(q), w_j\rangle =
\sum_j\Big\langle \sum_i(C(q)-q_i)\Big(\sum_k K(q)_{kj} - K(q)_{ij}\Big), w_j\Big\rangle
,
\\
&N_j(q) = \sum_i(C(q)-q_i)\Big(\sum_k K(q)_{kj} - K(q)_{ij}\Big)
\\&\qquad\ \,
= \sum_i(C(q)-q_i)\Big(\sum_k k_s(q_k-q_j) - k_s(q_i-q_j)\Big)
\\&\qquad\ \,
= -\sum_i(C(q)-q_i)k_s(q_i-q_j) .
\end{align*}
Note that the vector field $N(q)$ satisfies the contraints  
\[\sum_{i,j}\langle (C(q)-q_i)k_s(q_i-q_j), X(q_j)\rangle = 0\] for all $X\in \mathfrak{se}(d)$ since $X(q_j$ are in the null-space of the kernel $k_s$.

The Poisson bracket on the cotangent space $T^*\on{Land}^n$ with coordinates $(q,\al)$ is given by 
$\{f,g\} = \sum_i(\langle \p_{\al_i}f,\p_{q_i}g\rangle - \langle \p_{q_i}f,\p_{\al_i}g\rangle) $.
We compute the Poisson bracket of the following two functions, where $f(q)$ is a smooth scaling function for the normal field $N(q)$:
\begin{align*}
&F(q,\al) = \sum_j\langle\al_j, f(q).N_j(q)\rangle
= -f(q)\sum_{i,j}k_s(q_i-q_j)\langle \al_j, C(q)-q_i \rangle
,
\\
&\p_{q_\ell}F(q,\al)(p) = - \,\p_{q_\ell}f(q)(p).\sum_{i,j}k_s(q_i-q_j)\langle \al_j, C(q)-q_i\rangle
\\&\qquad
- f(q).\sum_{i,j}\frac1n.k_s(q_i-q_j)\langle \al_j,p\rangle
\\&\qquad
+ f(q).\sum_{j}k_s(q_\ell-q_j)\langle \al_j,p\rangle
\\&\qquad
- f(q).\sum_{j} dk_s(q_\ell-q_j)(p)
\big(\langle \al_j,C(q)-q_\ell\rangle - 
\langle \al_\ell,C(q)-q_j\rangle \big)
,
\\
&\p_{\al_\ell} F(q,\al) = -f(q)\sum_i(C(q)-q_i) k_s(q_i-q_\ell) ,
\\
&H(q,\al) = \frac12\sum_{i,j=1}^n \langle \al_i, k_s(q_i-q_j)\al_j\rangle ,
\\
&\p_{q_\ell}H(q,\al)(p) = \frac12\sum_{i,j=1}^n \langle \al_i, dk_s(q_i-q_j)\big((\de_{\ell i}-\de_{\ell j}).p\big).\al_j\rangle 
\\&\qquad
= \frac12\sum_{j=1}^n \langle \al_\ell, dk_s(q_\ell-q_j)(p).\al_j\rangle
- \frac12\sum_{i=1}^n \langle \al_i, dk_s(q_i-q_\ell)(p).\al_\ell\rangle
\\
&\qquad=  \sum_{j=1}^n \langle \al_\ell, dk_s(q_\ell-q_j)(p).\al_j\rangle ,
\\
&
\p_{\al_\ell}H(q,\al) = \sum_{i,j}\langle \de_{\ell i}, k_s(q_i-q_j)\al_j\rangle = \sum_m k_s(q_\ell - q_m)\al_m .
\end{align*}
We put this together to compute the Poisson bracket.
\begin{align*}
&\{F,H\}(q,\al)= \sum_\ell\Big(\Big\langle \p_{\al_\ell}F(q,\al), 
\p_{q_\ell}H(q,\al)\Big\rangle
- \Big\langle \p_{q_\ell}F(q,\al), \p_{\al_\ell}H(q,\al) \Big\rangle \Big)
\\&
= - f(q)\sum_{\ell, j, i}  k_s(q_i-q_\ell)dk_s(q_\ell-q_j)(C(q)-q_i)\langle \al_\ell, \al_j\rangle
\\&\quad
 + \sum_{\ell,i,j,m}\p_{q_\ell}f(q)(\al_m).k_s(q_\ell - q_m)k_s(q_i-q_j)\langle \al_j, C(q)-q_i\rangle
\\&\quad
+ f(q)\sum_{\ell,i,j,m}\frac1n.k_s(q_i-q_j)k_s(q_\ell - q_m)\langle \al_j,\al_m\rangle
\\&\quad
- f(q)\sum_{\ell,j,m}k_s(q_\ell-q_j)k_s(q_\ell - q_m)\langle \al_j,\al_m\rangle
\\&\quad
+ f(q)\sum_{\ell,j,m} k_s(q_\ell - q_m) dk_s(q_\ell-q_j)(\al_m)
\big(\langle \al_j,C(q)-q_\ell\rangle - 
\langle \al_\ell,C(q)-q_j\rangle \big) .
\end{align*}

\subsection{Affine equivariance of vector-field-valued differential operators}
\label{app:affine-equivariant-operators}

Let \(d\geq 1\), and let
$ \operatorname{Aff}(d)=\operatorname{GL}(d)\ltimes \mathbb{R}^d $
act on \(\mathbb{R}^d\) by 
$g(x)=Ax+b$ for $A\in \operatorname{GL}(d)$ and $b\in \mathbb{R}^d$.
The push forward action on smooth vector fields is 
\[
    (g_*v)(x)
    =
    Dg_{g^{-1}x}\,v(g^{-1}x)
    =
    A\,v(A^{-1}(x-b)).
\]

\begin{proposition}
Any linear differential operator 
$L = C^\infty(\mathbb{R}^d,\mathbb{R}^d)\to
C^\infty(\mathbb{R}^d,\mathbb{R}^d)$
which is equivariant under the affine pushforward action, i.e., $L(g_*v)=g_*(Lv)$ for all \(g\in \operatorname{Aff}(d)\) and every  \(v\in C^\infty(\mathbb{R}^d,\mathbb{R}^d)\), is of order 0 and hence a multiple of the identity.  
\end{proposition}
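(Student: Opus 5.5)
We write a general linear differential operator of finite order $m$ as
\[
(Lv)(x)=\sum_{|\al|\le m} C_\al(x)\,\p^\al v(x),\qquad C_\al(x)\in\R^{d\times d},
\]
and impose equivariance in three stages: translations, then linear maps, then scalings.

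\emph{Translations.} Take $g(x)=x+b$, so $g_*v=v(\cdot-b)$. Equivariance gives $C_\al(x)=C_\al(x-b)$ for all $b$. Hence every $C_\al$ is a constant matrix, and $L$ is a constant-coefficient operator with matrix symbol $P(\xi)=\sum_\al C_\al (i\xi)^\al$.

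\emph{Linear maps.} Take $g(x)=Ax$. Then $g_*v(x)=A\,v(A^{-1}x)$, and in Fourier variables $\widehat{g_*v}(\xi)=|\det A|\,A\,\hat v(A^{T}\xi)$. Equivariance becomes the symbol identity
\[
P(\xi)\,A=A\,P(A^{T}\xi)\qquad\text{for all }A\in\operatorname{GL}(d),\ \xi\in\R^d.
\]
It is convenient to decompose $P=\sum_k P_k$ into homogeneous parts of degree $k$; each $P_k$ satisfies the identity separately, because the identity commutes with the degree grading.

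\emph{Scalings kill positive degree.} Set $A=t\,\Id$ with $t>0$. The identity reads $P_k(\xi)\,t=t\,P_k(t\xi)=t^{k+1}P_k(\xi)$. Thus $t^k P_k=P_k$ for all $t>0$, so $P_k=0$ whenever $k\ge1$. Only $P_0=C_0$ survives, which means $L$ has order $0$.

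\emph{The order-zero part is scalar.} Now $C_0A=AC_0$ for every $A\in\operatorname{GL}(d)$. Since $\operatorname{GL}(d)$ spans all matrices, $C_0$ lies in the center of the matrix algebra, so $C_0=c\,\Id$. Hence $L=c\,\Id$. (One can also see this directly from the elementary matrices $\Id+E_{ij}$.)

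\emph{Main obstacle.} The delicate step is bookkeeping the pushforward factors correctly. One must confirm that the dilation really produces the mismatch $t$ versus $t^{k+1}$ rather than cancelling. This can be verified without Fourier transforms by testing on polynomial fields: for $v(x)=x^\be e_j$ with $|\be|=k$, the term $C_\al\p^\al$ with $|\al|=k$ sends $g_*v=t^{1-k}v$ to a constant, whereas $g_*(Lv)=t\,Lv$, forcing $t^{1-k}=t$. A second point to note is that the scalings alone already force order $0$, so the full $\operatorname{GL}(d)$ is needed only for the final scalar conclusion. Nonzero polynomial test fields guarantee that each coefficient is actually detected, so no degree is missed.
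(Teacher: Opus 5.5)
Your proposal is correct and follows the paper's proof: translations give constant coefficients, dilations $x\mapsto tx$ produce the mismatch $t^{1-|\alpha|}$ versus $t$ that eliminates every term of positive order, and commuting with all of $\operatorname{GL}(d)$ forces $C_0=c\,\Id$. The only difference is presentational. You run the dilation step through the Fourier symbol and its homogeneous components, while the paper compares $\partial^\alpha((s_r)_*v)(x)=r^{1-|\alpha|}\partial^\alpha v(x/r)$ directly in physical space; your homogeneous splitting makes explicit the step the paper leaves terse, namely why the identity forces $C_\alpha=0$ for $|\alpha|>0$.
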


\begin{proof}
Translation invariant linear differential operators have constant coefficients, which implies that $L$ is of the form
\[
    Lv
    =
    \sum_{|\alpha|\leq \on{deg}(L)} C_\alpha\,\partial^\alpha v \text{ for }C_\al\in \on{End}(\R^d)\,.
\]
Dilation 
$s_r(x)=r x$ for $r>0$ acts on
a vector field by $(s_r)_*v(x) = r v(x/r)$.
For a multi-index \(\alpha\) we get 
$\partial^\alpha\bigl((s_r)_*v\bigr)(x)   = r^{1-|\alpha|}\partial^\alpha v(x/r)$.
Therefore
\begin{align*}    
L((s_r)_*v)(x)
    &= \sum_{|\alpha|\leq m}
    r^{1-|\alpha|}
    C_\alpha\,\partial^\alpha v(x/r).
\\
(s_r)_*(Lv)(x) &= r (Lv)(x/r)
    = \sum_{|\alpha|\leq m} r C_\alpha\,\partial^\alpha v(x/r).
\\&
\implies  \sum_{|\alpha|\leq m}
    \bigl(r^{1-|\alpha|}-r\bigr)
    C_\alpha\,\partial^\alpha v(x/r)
    =
    0\,
\end{align*}
which implies $C_\al =0$ for $|\al|>0$ so that 
$Lv=C_0v$ for a constant matrix \(C_0\in \operatorname{End}(\mathbb{R}^d)\).
Now equivariance under \(\operatorname{GL}(d)\) implies that $C_0$ commutes with any $A\in \on{GL}(\R^d)$ so that finally $C_0=c.\on{Id}$. 
\end{proof}

\begin{remark}
The conclusion depends on the operator being linear and vector-field-valued,
with the pushforward representation used on both sides. It does not rule out
affine-equivariant operators with tensor-valued outputs. For example,
\[
    v\longmapsto D^2v
\]
is affine-equivariant as a map from vector fields to the space of sections of
\[
    T\mathbb{R}^d\otimes S^2T^*\mathbb{R}^d,
\]
since
\[
    D^2(g_*v)(x)[u,w]
    =
    A\,D^2v(g^{-1}x)[A^{-1}u,A^{-1}w].
\]
This still has constant coefficients.
\end{remark}

\bibliographystyle{alpha}
\bibliography{library}

\end{document}